\newcommand{\tvec}[1]{{\mathbf t}_{#1}}
\newcommand{\cvec}[1]{{\mathbf c}_{#1}}
\newcommand{\vocab}{\mathcal{W}}
\newcommand{\vocabsize}{|\vocab|}
\newcommand{\var}[1]{\rho_{#1}}
\newcommand{\rv}[2]{\mathrm{#1}_{#2}}
\newtheorem{theorem}{Theorem}
\newtheorem{lemma}[theorem]{Lemma}
\theoremstyle{definition}
\newtheorem{definition}{Definition}
\title{Incremental Skip-gram Model with Negative Sampling}
\author{Nobuhiro Kaji \and Hayato Kobayashi \\
Yahoo Japan Corporation\\
{\tt \{nkaji,hakobaya\}@yahoo-corp.jp}}
\date{}
\begin{document}
\maketitle
\begin{abstract}
This paper explores an incremental training strategy for the skip-gram model with negative sampling (SGNS) from both empirical and theoretical perspectives. Existing methods of neural word embeddings, including SGNS, are multi-pass algorithms and thus cannot perform incremental model update. To address this problem, we present a simple incremental extension of SGNS and provide a thorough theoretical analysis to demonstrate its validity. Empirical experiments demonstrated the correctness of the theoretical analysis as well as the practical usefulness of the incremental algorithm.
\end{abstract}

\section{Introduction} \label{sec:intro}

Existing methods of neural word embeddings are typically designed to go through the entire training data multiple times. For example, negative sampling \cite{Mikolov13c} needs to pre-compute the noise distribution from the entire training data before performing Stochastic Gradient Descent (SGD). It thus needs to go through the training data at least twice. Similarly, hierarchical soft-max \cite{Mikolov13c} has to determine the tree structure and GloVe \cite{Pennington14} has to count co-occurrence frequencies before performing SGD.

The fact that those existing methods are multi-pass algorithms means that they cannot perform incremental model update when additional training data is provided. Instead, they have to re-train the model on the old and new training data from scratch. 

However, the re-training is obviously inefficient since it has to process the entire training data received thus far whenever new training data is provided. This is especially problematic when the amount of the new training data is relatively smaller than the old one. One such a situation is that the embedding model is updated on a small amount of training data that includes newly emerged words for instantly adding them to the vocabulary set. Another situation is that the word embeddings are learned from ever-evolving data such as news articles and microbologs \cite{Peng17} and the embedding model is periodically updated on newly generated data (\textit{e.g.}, once in a week or month).

This paper investigates an incremental training method of word embeddings with a focus on the skip-gram model with negative sampling (SGNS) \cite{Mikolov13c} for its popularity. We present a simple incremental extension of SGNS, referred to as \textit{incremental SGNS}, and provide a thorough theoretical analysis to demonstrate its validity. Our analysis reveals that, under a mild assumption, the optimal solution of incremental SGNS agrees with the original SGNS when the training data size is infinitely large. See Section~\ref{sec:analysis} for the formal and strict statement. Additionally, we present techniques for the efficient implementation of incremental SGNS.

Three experiments were conducted to assess the correctness of the theoretical analysis as well as the practical usefulness of incremental SGNS\@. The first experiment empirically investigates the validity of the theoretical analysis result. The second experiment compares the word embeddings learned by incremental SGNS and the original SGNS across five benchmark datasets, and demonstrates that those word embeddings are of comparable quality. The last experiment explores the training time of incremental SGNS, demonstrating that it is able to save much training time by avoiding expensive re-training when additional training data is provided.

\section{SGNS Overview} \label{sec:sgns}

As a preliminary, this section provides a brief overview of SGNS\@.

Given a word sequence, $w_1,w_2,\dots,w_{n}$, for training, the skip-gram model seeks to minimize the following objective to learn word embeddings:
\begin{align*}
\mathcal{L}_\text{SG}=-\frac{1}{n}\sum_{i=1}^{n}\sum_{\substack{\mid j\mid\leq c\\j\not=0}}\log p(w_{i+j}\mid w_{i}),
\end{align*}
where $w_{i}$ is a target word and $w_{i+j}$ is a context word within a window of size $c$. $p(w_{i+j}\mid w_{i})$ represents the probability that $w_{i+j}$ appears within the neighbor of $w_i$, and is defined as
\begin{align}
p(w_{i+j}\mid w_{i})=\frac{\exp(\tvec{w_i}\cdot\cvec{w_{i+j}})}{\sum_{w\in\vocab}\exp(\tvec{w_i}\cdot\cvec{w})},\label{eq:softmax}
\end{align}
where $\tvec{w}$ and $\cvec{w}$ are $w$'s embeddings when it behaves as a target and context, respectively. $\vocab$ represents the vocabulary set.

Since it is too expensive to optimize the above objective, Mikolov et al.~\shortcite{Mikolov13c} proposed negative sampling to speed up skip-gram training. This approximates Eq.~(\ref{eq:softmax}) using sigmoid functions and $k$ randomly-sampled words, called \textit{negative samples}. The resulting objective is given as
\begin{align*}
\mathcal{L}_\text{SGNS}\!=\!-\frac{1}{n}\sum_{i=1}^{n}\sum_{\substack{\mid j\mid\leq c\\j\not=0}}\psi_{w_i,w_{i+j}}^{+}\!+\!k\mathbb{E}_{v\sim q(v)}[\psi_{w_i,v}^{-}],
\end{align*}
where $\psi_{w,v}^{+}=\log\sigma(\tvec{w}\cdot\cvec{v})$, $\psi_{w,v}^{-}=\log\sigma(-\tvec{w}\cdot\cvec{v})$, and $\sigma(x)$ is the sigmoid function. The negative sample $v$ is drawn from a smoothed unigram probability distribution referred to as \textit{noise distribution}: $q(v)\propto f(v)^{\alpha}$, where $f(v)$ represents the frequency of a word $v$ in the training data and $\alpha$ is a smoothing parameter ($0<\alpha\leq 1$). 

The objective is optimized by SGD\@. Given a target-context word pair ($w_{i}$ and $w_{i+j}$) and $k$ negative samples ($v_{1},v_{2},\dots,v_{k}$) drawn from the noise distribution, the gradient of
$-\psi_{w_i,w_{i+j}}^{+}-k\mathbb{E}_{v\sim q(v)}[\psi_{w_i,v}^{-}]\approx-\psi_{w_i,w_{i+j}}^{+}-\sum_{k'=1}^{k}\psi_{w_i,v_{k'}}^{-}$
is computed. Then, the gradient descent is performed to update $\tvec{w_i}$, $\cvec{w_{i+j}}$, and $\cvec{v_1},\dots,\cvec{v_k}$.

SGNS training needs to scan the entire training data multiple times because it has to pre-compute the noise distribution $q(v)$ before performing SGD\@. This makes it difficult to perform incremental model update when additional training data is provided.

\section{Incremental SGNS} \label{sec:isgns}

This section explores incremental training of SGNS\@. The incremental training algorithm (Section \ref{subsec:algorithm}), its efficient implementation (Section \ref{subsec:implementation}), and the computational complexity (Section \ref{subsec:isgns_complexity}) are discussed in turn.

\subsection{Algorithm} \label{subsec:algorithm}

Algorithm~\ref{alg:isgns} presents \textit{incremental SGNS}, which goes through the training data in a single-pass to update word embeddings incrementally. Unlike the original SGNS, it does not pre-compute the noise distribution. Instead, it reads the training data word by word%
\footnote{In practice, Algorithm~\ref{alg:isgns} buffers a sequence of words $w_{i-c},\dots,w_{i+c}$ (rather than a single word $w_i$) at each step, as it requires an access to the context words $w_{i+j}$ in line 7. This is not a practical problem because the window size $c$ is usually small and independent from the training data size $n$.} to incrementally update the word frequency distribution and the noise distribution while performing SGD\@. Hereafter, the original SGNS (\textit{c.f.}, Section~\ref{sec:sgns}) is referred to as \textit{batch SGNS} to emphasize that the noise distribution is computed in a batch fashion.

The learning rate for SGD is adjusted by using AdaGrad \cite{Duchi11}. Although the linear decay function has widely been used for training batch SGNS \cite{Mikolov}, adaptive methods such as AdaGrad are more suitable for the incremental training since the amount of training data is unknown in advance or can increase unboundedly.

It is straightforward to extend the incremental SGNS to the mini-batch setting by reading a subset of the training data (or mini-batch), rather than a single word, at a time to update the noise distribution and perform SGD (Algorithm~\ref{alg:minibatch}). Although this paper primarily focuses on the incremental SGNS, the mini-batch algorithm is also important in practical terms because it is easier to be multi-threaded.

Alternatives to Algorithms~\ref{alg:minibatch} might be possible. Other possible approaches include computing the noise distribution separately on each subset of the training data, fixing the noise distribution after computing it from the first (possibly large) subset, and so on. We exclude such alternatives from our investigation because it is considered difficult to provide them with theoretical justification. 

\begin{algorithm}[t]
\footnotesize
\caption{Incremental SGNS}
\label{alg:isgns}
\begin{algorithmic}[1]
\STATE $f(w)\leftarrow 0$ for all $w\in\vocab$
\FOR{$i=1,\dots,n$}
\STATE $f(w_i)\leftarrow f(w_i)+1$
\STATE $q(w)\leftarrow\frac{f(w)^{\alpha}}{\sum_{w'\in\vocab}f(w')^{\alpha}}$ for all $w\in\vocab$
\FOR{$j=-c,\dots,-1,1,\dots,c$}
\STATE draw $k$ negative samples from $q(w)$: $v_1,\dots,v_{k}$
\STATE use SGD to update $\tvec{w_i}$, $\cvec{w_{i+j}}$, and $\cvec{v_1},\dots,\cvec{v_k}$
\ENDFOR
\ENDFOR
\end{algorithmic}

\end{algorithm}
\begin{algorithm}[t]
\footnotesize
\caption{Mini-batch SGNS}
\label{alg:minibatch}
\begin{algorithmic}[1]
\FOR{each subset $\mathcal{D}$ of the training data}
\STATE update the noise distribution using $\mathcal{D}$
\STATE perform SGD over $\mathcal{D}$
\ENDFOR
\end{algorithmic}
\end{algorithm}

\subsection{Efficient implementation} \label{subsec:implementation}

Although the incremental SGNS is conceptually simple, implementation issues are involved. 

\subsubsection{Dynamic vocabulary} \label{subsec:dynamic_vocab}

One problem that arises when training incremental SGNS is how to maintain the vocabulary set. Since new words emerge endlessly in the training data, the vocabulary set can grow unboundedly and exhaust a memory.

We address this problem by dynamically changing the vocabulary set. The Misra-Gries algorithm \cite{Misra82} is used to approximately keep track of top-$m$ frequent words during training, and those words are used as the dynamic vocabulary set. This method allows the maximum vocabulary size to be explicitly limited to $m$, while being able to dynamically change the vocabulary set.

\begin{table}[t]
\footnotesize
\begin{center}
\begin{tabular}{@{}c@{\;}c@{}}
 \begin{tabular}{c|ccc}
  $w$ & \textit{a} & \textit{b} & \textit{c} \\\hline
  $q(w)$ & $0.5$ & $0.3$ & $0.2$ \\
  \end{tabular}
&
\begin{tabular}{c}
  $T=(a,a,a,a,a,b,b,b,c,c)$
\end{tabular}
\\
\end{tabular}
\caption{Example noise distribution $q(w)$ for the vocabulary set $\vocab=\{a,b,c\}$ (left) and the corresponding unigram table $T$ of size $10$ (right).}
\label{tab:unigram-table}
\end{center}
\end{table}

\begin{algorithm}[t]
\footnotesize
\caption{Adaptive unigram table.}
\begin{algorithmic}[1]
\STATE $f(w)\leftarrow 0$ for all $w\in\vocab$
\STATE $z\leftarrow 0$
\FOR{$i=1,\dots,n$}
\STATE $f(w_i)\leftarrow f(w_i)+1$
\STATE $F\leftarrow f(w_i)^{\alpha}-(f(w_i)-1)^{\alpha}$
\STATE $z\leftarrow z+F$
\IF{$|T|<\tau$}
\STATE add $F$ copies of $w_i$ to $T$
\ELSE
\FOR{$j=1,\dots,\tau$}
\STATE $T[j]\leftarrow w_i$ with probability $\frac{F}{z}$
\ENDFOR
\ENDIF
\ENDFOR
\end{algorithmic}
\label{alg:reservoir}
\end{algorithm}

\subsubsection{Adaptive unigram table} \label{subsec:reservoir}

Another problem is how to generate negative samples efficiently. Since $k$ negative samples per target-context pair have to be generated by the noise distribution, the sampling speed has a significant effect on the overall training efficiency. 

Let us first examine how negative samples are generated in batch SGNS\@. In a popular implementation \cite{Mikolov}, a word array (referred to as a \textit{unigram table}) is constructed such that the number of a word $w$ in it is proportional to $q(w)$. See Table~\ref{tab:unigram-table} for an example. Using the unigram table, negative samples can be efficiently generated by sampling the table elements uniformly at random. It takes only $O(1)$ time to generate one negative sample. 

The above method assumes that the noise distribution is fixed and thus cannot be used directly for the incremental training. One simple solution is to reconstruct the unigram table whenever new training data is provided. However, such a method is not effective for the incremental SGNS, because the unigram table reconstruction requires $\mathcal{O}(\vocabsize)$ time.%
\footnote{This overhead is amortized in mini-batch SGNS if the mini-batch size is sufficiently large. Our discussion here is dedicated to efficiently perform the incremental training irrespective of the mini-batch size.}

We propose a reservoir-based algorithm for efficiently updating the unigram table \cite{Vitter85,Efraimidis15} (Algorithm~\ref{alg:reservoir}). The algorithm incrementally update the unigram table $T$ while limiting its maximum size to $\tau$. In case $|T|<\tau$, it can be easily confirmed that the number of a word $w$ in $T$ is  $f(w)^{\alpha} (\propto q(w))$. In case $|T|=\tau$, since $z=\sum_{w\in\vocab}f(w)^{\alpha}$ is equal to the normalization factor of the noise distribution, it can be proven by induction that, for all $j$, $T[j]$ is a word $w$ with probability $q(w)$.
See \cite{Vitter85,Efraimidis15} for reference.

\paragraph{Note on implementation}

In line 8, $F$ copies of $w_i$ are added to $T$. When $F$ is not an integer, the copies are generated so that their expected number becomes $F$. Specifically, $\lceil F\rceil$ copies are added to $T$ with probability $F-\lfloor F\rfloor$, and $\lfloor F\rfloor$ copies are added otherwise.

The loop from line 10 to 12 becomes expensive if implemented straightforwardly because the maximum table size $\tau$ is typically set large (\textit{e.g.}, $\tau=10^8$ in \texttt{word2vec} \cite{Mikolov}). For acceleration, instead of checking all elements in the unigram table, randomly chosen $\frac{\tau F}{z}$ elements are substituted with $w_i$. Note that $\frac{\tau F}{z}$ is the expected number of table elements to be substituted in the original algorithm. This approximation achieves great speed-up because we usually have $F\ll z$.  In fact, it can be proven that it takes $O(1)$ time when $\alpha=1.0$. See Appendix%
\footnote{The appendices are in the supplementary material.} A for more discussions.

\subsection{Computational complexity} \label{subsec:isgns_complexity}

Both incremental and batch SGNS have the same space complexity, which is independent of the training data size~$n$. Both require $\mathcal{O}(\vocabsize)$ space to store the word embeddings and the word frequency counts, and $\mathcal{O}(|T|)$ space to store the unigram table.

The two algorithms also have the same time complexity. Both require $\mathcal{O}(n)$ training time when the training data size is $n$. Although incremental SGNS requires extra time for updating the dynamic vocabulary and adaptive unigram table, these costs are practically negligible, as will be demonstrated in Section~\ref{subsec:exp3}.

\section{Theoretical Analysis} \label{sec:analysis} 

Although the extension from batch to incremental SGNS is simple and intuitive, it is not readily clear whether incremental SGNS can learn word embeddings as well as the batch counterpart. To answer this question, in this section we examine incremental SGNS from a theoretical point of view.

The analysis begins by examining the difference between the objectives optimized by batch and incremental SGNS (Section \ref{subsec:obj_diff}). Then, probabilistic properties of their difference are investigated to demonstrate the relationship between batch and incremental SGNS (Sections~\ref{subsec:unsmoothed_case} and \ref{subsec:smoothed_case}). We shortly touch the mini-batch SGNS at the end of this section (Section~\ref{subsec:minibatch}).

\subsection{Objective difference} \label{subsec:obj_diff}

As discussed in Section \ref{sec:sgns}, batch SGNS optimizes the following objective:
\begin{align*}
\mathcal{L}_\text{B}(\theta)
\!=\!-\frac{1}{n}\sum_{i=1}^{n}\sum_{\substack{\mid j\mid\leq c\\j\ne 0}}\psi_{w_i,w_{i+j}}^{+}\!+\!k\mathbb{E}_{v\sim q_n(v)}[\psi_{w_i,v}^{-}],
\end{align*}
where $\theta=(\tvec{1},\tvec{2},\dots,\tvec{\vocabsize},\cvec{1},\cvec{2},\dots,\cvec{\vocabsize})$ collectively represents the model parameter%
\footnote{We treat words as integers and thus $\vocab\!=\!\{1,2,\dots |\vocab|\}$.}  (\textit{i.e.}, word embeddings) and $q_n(v)$ represents the noise distribution. Note that the noise distribution is represented in a different notation than Section~\ref{sec:sgns} to make its dependence on the whole training data explicit. The function $q_i(v)$ is defined as $q_i(v)=\frac{f_i(v)^{\alpha}}{\sum_{v'\in\vocab}f_i(v')^{\alpha}}$, where $f_i(v)$ represents the word frequency in the first $i$ words of the training data.

In contrast, incremental SGNS computes the gradient of $-\psi_{w_i,w_{i+j}}^{+}-k\mathbb{E}_{v\sim q_i(v)}[\psi_{w_i,v}^{-}]$ at each step to perform gradient descent. Note that the noise distribution does not depend on $n$ but rather on $i$. Because it can be seen as a sample approximation of the gradient of
\begin{align*}
\mathcal{L}_\text{I}(\theta)
=-\frac{1}{n}\sum_{i=1}^{n}\sum_{\substack{\mid j\mid\leq c\\j\ne 0}}\psi_{w_i,w_{i+j}}^{+}\!+\!k\mathbb{E}_{v\sim q_i(v)}[\psi_{w_i,v}^{-}],
\end{align*}
incremental SGNS can be interpreted as optimizing $\mathcal{L}_\text{I}(\theta)$ with SGD.

Since the expectation terms in the objectives can be rewritten as $\mathbb{E}_{v\sim q_i(v)}[\psi_{w_i,v}^{-}]=\sum_{v\in\vocab}q_i(v)\psi_{w_i,v}^{-}$, the difference between the two objectives can be given as
\begin{align*}
\Delta\mathcal{L}(\theta)
&=\mathcal{L}_\text{B}(\theta)-\mathcal{L}_\text{I}(\theta)\\
&=\frac{1}{n}\sum_{i=1}^{n}\sum_{\substack{|j|\leq c\\j\ne 0}}\!k\!\sum_{v\in\vocab}(q_i(v)\!-\!q_n(v))\psi_{w_i,v}^{-}\\
&=\frac{2ck}{n}\sum_{i=1}^{n}\sum_{v\in\vocab}(q_i(v)-q_n(v))\psi_{w_i,v}^{-}\\
&=\frac{2ck}{n}\!\!\sum_{w,v\in\vocab}\sum_{i=1}^n\delta_{w_i,w}(q_i(v)-q_n(v))\psi_{w,v}^{-}
\end{align*}
where $\delta_{w,v}=\delta(w=v)$ is the delta function.

\subsection{Unsmoothed case} \label{subsec:unsmoothed_case}

Let us begin by examining the objective difference $\Delta\mathcal{L}(\theta)$ in the unsmoothed case, $\alpha=1.0$.

The technical difficulty in analyzing $\Delta\mathcal{L}(\theta)$ is that it is dependent on the word order in the training data. To address this difficulty, we assume that the words in the training data are generated from some stationary distribution. This assumption allows us to investigate the property of $\Delta\mathcal{L}(\theta)$ from a probabilistic perspective. Regarding the validity of this assumption, we want to note that this assumption is already taken by the original SGNS: the probability that the target and context words co-occur is assumed to be independent of their position in the training data.

We below introduce some definitions and notations as the preparation of the analysis.

\begin{definition}
Let $\rv{X}{i,w}$ be a random variable that represents $\delta_{w_i,w}$. It takes $1$ when the $i$-th word in the training data is $w\in\vocab$ and $0$ otherwise.
\end{definition}

Remind that we assume that the words in the training data are generated from a stationary distribution. This assumption means that the expectation and (co)variance of $\rv{X}{i,w}$ do not depend on the index $i$. Hereafter, they are respectively denoted as $\mathbb{E}[\rv{X}{i,w}]=\mu_w$ and $\mathbb{V}[\rv{X}{i,w},\rv{X}{j,v}]=\var{w,v}$.

\begin{definition}
Let $\rv{Y}{i,w}$ be a random variable that represents $q_i(w)$ when $\alpha=1.0$. It is given as $\rv{Y}{i,w}=\frac{1}{i}\sum_{i'=1}^{i}\rv{X}{i',w}$.
\end{definition}

\subsubsection{Convergence of the first and second order moments of $\Delta\mathcal{L}(\theta)$} 
\label{subsec:moment_convergence}

It can be shown that the first order moment of $\Delta\mathcal{L}(\theta)$ has an analytical form.
\begin{theorem}
\label{theorem:first_moment}
The first order moment of $\Delta\mathcal{L}(\theta)$ is given as
\begin{align*}
	\mathbb{E}[\Delta\mathcal{L}(\theta)]=\frac{2ck(H_{n}-1)}{n}\sum_{w,v\in\vocab}\var{w,v}\psi_{w,v}^{-},
\end{align*}
where $H_{n}$ is the $n$-th harmonic number.
\begin{proof}[Sketch of proof]
Notice that $\mathbb{E}[\Delta\mathcal{L}(\theta)]$ can be written as
\begin{align*}
\frac{2ck}{n}\sum_{w,v\in\vocab}\sum_{i=1}^{n}\bigl(\mathbb{E}[\rv{X}{i,w}\rv{Y}{i,v}]-\mathbb{E}[\rv{X}{i,w}\rv{Y}{n,v}]\bigr)\psi_{w,v}^{-}.
\end{align*}
Because we have, for any $i$ and $j$ such that $i\leq j$,
\begin{align*}
\mathbb{E}[\rv{X}{i,w}\rv{Y}{j,v}]
=\!\sum_{j'=1}^{j}\mathbb{E}[\rv{X}{i,w}\frac{\rv{X}{j',v}}{j}]
=\mu_w\mu_v +\frac{\var{w,v}}{j},
\end{align*}
plugging this into $\mathbb{E}[\Delta\mathcal{L}(\theta)]$ proves the theorem. See Appendix B.1 for the complete proof.
\end{proof}
\end{theorem}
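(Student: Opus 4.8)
The plan is to evaluate $\mathbb{E}[\Delta\mathcal{L}(\theta)]$ by a direct computation: substitute the random–variable notation into the closed form for $\Delta\mathcal{L}(\theta)$ obtained in Section~\ref{subsec:obj_diff}, push the expectation inside the finite sums, and reduce everything to evaluating a single bilinear moment $\mathbb{E}[\rv{X}{i,w}\rv{Y}{j,v}]$.

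First, using $\delta_{w_i,w}=\rv{X}{i,w}$ and (since $\alpha=1.0$) $q_i(v)=\rv{Y}{i,v}$, $q_n(v)=\rv{Y}{n,v}$, the objective difference derived in Section~\ref{subsec:obj_diff} becomes
\begin{align*}
\Delta\mathcal{L}(\theta)=\frac{2ck}{n}\sum_{w,v\in\vocab}\sum_{i=1}^{n}\rv{X}{i,w}\bigl(\rv{Y}{i,v}-\rv{Y}{n,v}\bigr)\psi_{w,v}^{-}.
\end{align*}
Since $\psi_{w,v}^{-}=\log\sigma(-\tvec{w}\cdot\cvec{v})$ depends only on the fixed parameter $\theta$ and not on the random word sequence, it is a constant for the purpose of this expectation; linearity of expectation over the finite double sum then gives exactly the expression in the sketch,
\begin{align*}
\mathbb{E}[\Delta\mathcal{L}(\theta)]=\frac{2ck}{n}\sum_{w,v\in\vocab}\sum_{i=1}^{n}\bigl(\mathbb{E}[\rv{X}{i,w}\rv{Y}{i,v}]-\mathbb{E}[\rv{X}{i,w}\rv{Y}{n,v}]\bigr)\psi_{w,v}^{-}.
\end{align*}

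The core step is the moment identity $\mathbb{E}[\rv{X}{i,w}\rv{Y}{j,v}]=\mu_w\mu_v+\var{w,v}/j$ for all $i\le j$. To prove it I would expand $\rv{Y}{j,v}=\frac{1}{j}\sum_{j'=1}^{j}\rv{X}{j',v}$, use linearity to reach $\frac{1}{j}\sum_{j'=1}^{j}\mathbb{E}[\rv{X}{i,w}\rv{X}{j',v}]$, and write $\mathbb{E}[\rv{X}{i,w}\rv{X}{j',v}]=\mu_w\mu_v+\mathbb{V}[\rv{X}{i,w},\rv{X}{j',v}]$. Here the stationarity/independence assumption on the training data is what carries the argument: the covariance term equals $\var{w,v}$ when $j'=i$ (which is one of the indices $1,\dots,j$ precisely because $i\le j$) and vanishes for $j'\ne i$, so the sum collapses to $j\mu_w\mu_v+\var{w,v}$, and dividing by $j$ yields the identity. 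I expect this to be the main obstacle — not because the algebra is hard, but because it is the only place the modeling assumption enters, so the write-up must pin down exactly which assumption is invoked (the ``mild assumption'' alluded to in Section~\ref{sec:intro}) and verify that every cross-position term really drops out.

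With the identity in hand the remainder is bookkeeping. Applying it with $j=i$ and then with $j=n$ (both valid, as $i\le i$ and $1\le i\le n$) makes each bracket equal to $\var{w,v}\bigl(\frac{1}{i}-\frac{1}{n}\bigr)$, whose dependence on $w,v$ is only through $\var{w,v}$. Summing over $i=1,\dots,n$ uses $\sum_{i=1}^{n}\frac{1}{i}=H_n$ and $\sum_{i=1}^{n}\frac{1}{n}=1$ to give $\var{w,v}(H_n-1)$; substituting back and factoring the $w,v$-independent constant $\frac{2ck(H_n-1)}{n}$ out of $\sum_{w,v\in\vocab}$ produces the claimed formula. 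I would relegate the routine moment-identity verification and harmonic-sum manipulation to the appendix, as the sketch does.
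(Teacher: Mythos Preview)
Your proposal is correct and follows essentially the same route as the paper's proof: rewrite $\mathbb{E}[\Delta\mathcal{L}(\theta)]$ in the $\rv{X}{i,w},\rv{Y}{j,v}$ notation, establish the bilinear moment identity $\mathbb{E}[\rv{X}{i,w}\rv{Y}{j,v}]=\mu_w\mu_v+\var{w,v}/j$ by expanding $\rv{Y}{j,v}$, and then sum $\frac{1}{i}-\frac{1}{n}$ over $i$ to obtain $H_n-1$. If anything, you are more explicit than the paper about the key point that cross-position covariances $\mathbb{V}[\rv{X}{i,w},\rv{X}{j',v}]$ vanish for $j'\ne i$ (independence across positions) so that only the single term $j'=i$ contributes $\var{w,v}$; the paper's appendix writes this step as one line without isolating that assumption.
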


Theorem \ref{theorem:first_moment} readily gives the convergence property of the first order moment of $\Delta\mathcal{L}(\theta)$:
\begin{theorem}
\label{theorem:first_moment_convergence}
The first-order moment of $\Delta\mathcal{L}(\theta)$ decreases in the order of $\mathcal{O}(\frac{\log(n)}{n})$:
\begin{align*}
	\mathbb{E}[\Delta\mathcal{L}(\theta)]=\mathcal{O}\biggl(\frac{\log(n)}{n}\biggr),
\end{align*}
and thus converges to zero in the limit of infinity:
\begin{align*}
\lim_{n\rightarrow\infty}\mathbb{E}[\Delta\mathcal{L}(\theta)]=0.
\end{align*}
\begin{proof}
We have $H_{n}\!\!=\!\mathcal{O}(\log(n))$ from the upper integral bound, and thus Theorem 1 gives the proof.
\end{proof}
\end{theorem}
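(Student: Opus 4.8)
The plan is to read the result directly off the closed form supplied by Theorem~\ref{theorem:first_moment}, so almost all the work has already been done. First I would abbreviate $C=2ck\sum_{w,v\in\vocab}\var{w,v}\psi_{w,v}^{-}$ and observe that $C$ is a finite constant that does not depend on $n$: the vocabulary $\vocab$ is finite, the covariances $\var{w,v}$ are determined by the fixed stationary distribution (not by the particular realized corpus), and each $\psi_{w,v}^{-}=\log\sigma(-\tvec{w}\cdot\cvec{v})$ is finite for any fixed parameter $\theta$. With this notation Theorem~\ref{theorem:first_moment} reads $\mathbb{E}[\Delta\mathcal{L}(\theta)]=C\,(H_n-1)/n$.

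Next I would bound the harmonic number by the standard upper integral estimate, $H_n=\sum_{i=1}^{n}\tfrac1i\le 1+\int_1^n\frac{dx}{x}=1+\ln n$, whence $H_n-1\le\ln n$ and therefore $H_n-1=\mathcal{O}(\log n)$. Substituting this into the previous display gives $\mathbb{E}[\Delta\mathcal{L}(\theta)]=C\cdot\mathcal{O}(\log n/n)=\mathcal{O}(\log n/n)$, where the implied constant absorbs $C$ (a harmless dependence on the fixed $\theta$, so the asymptotics are purely in $n$). Finally, since $\log n/n\to 0$ as $n\to\infty$, the order bound immediately forces $\lim_{n\to\infty}\mathbb{E}[\Delta\mathcal{L}(\theta)]=0$, which is the second claim.

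There is essentially no hard step here; the entire content was front-loaded into Theorem~\ref{theorem:first_moment}. The only point that deserves a sentence of care — and the closest thing to an ``obstacle'' — is justifying that the $\vocab$-sum is a genuine $n$-independent constant: one must make explicit that the covariance structure $\var{w,v}$ arises from the stationarity assumption rather than from the length-$n$ sample, and that the parameter $\theta$ at which the moment is evaluated is held fixed, so that both the big-$\mathcal{O}$ and the limit are taken with respect to $n$ alone.
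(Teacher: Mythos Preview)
Your proposal is correct and follows exactly the same approach as the paper: invoke Theorem~\ref{theorem:first_moment} for the closed form and then use the integral bound $H_n=\mathcal{O}(\log n)$. Your write-up merely spells out, more carefully than the paper's one-line proof, why the $\vocab$-sum is an $n$-independent constant.
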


A similar result to Theorem \ref{theorem:first_moment_convergence} can be obtained for the second order moment of $\Delta\mathcal{L}(\theta)$ as well.
\begin{theorem}
\label{theorem:second_moment_convergence}
The second-order moment of $\Delta\mathcal{L}(\theta)$ decreases in the order of $\mathcal{O}(\frac{\log(n)}{n})$:
\begin{align*}
\mathbb{E}[\Delta\mathcal{L}(\theta)^2]=\mathcal{O}\biggl(\frac{\log(n)}{n}\biggr),
\end{align*}
and thus converges to zero in the limit of infinity:
\begin{align*}
\lim_{n\rightarrow\infty}\mathbb{E}[\Delta\mathcal{L}(\theta)^2]=0.
\end{align*}
\begin{proof}
Omitted. See Appendix B.2.
\end{proof}
\end{theorem}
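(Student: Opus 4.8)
The plan is to control $\mathbb{E}[\Delta\mathcal{L}(\theta)^2]$ with the same random-variable calculus used for Theorem~\ref{theorem:first_moment}, except that squaring destroys linearity, so I would first replace an exact (fourth-order) moment computation by a short chain of crude bounds that collapses the problem onto the single quantity $\mathbb{E}[(\rv{Y}{i,v}-\rv{Y}{n,v})^2]$, which is already available in closed form. Concretely, write $\Delta\mathcal{L}(\theta)=\frac{2ck}{n}\sum_{w,v\in\vocab}\psi_{w,v}^{-}A_{w,v}$ with $A_{w,v}=\sum_{i=1}^{n}\rv{X}{i,w}(\rv{Y}{i,v}-\rv{Y}{n,v})$, so that $\mathbb{E}[\Delta\mathcal{L}(\theta)^2]=\frac{4c^2k^2}{n^2}\sum_{w,v,w',v'\in\vocab}\psi_{w,v}^{-}\psi_{w',v'}^{-}\mathbb{E}[A_{w,v}A_{w',v'}]$. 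Since $\vocab$ is finite and every $\psi_{w,v}^{-}=\log\sigma(-\tvec{w}\cdot\cvec{v})$ is a fixed finite number for the fixed parameter $\theta$, it suffices to prove $\mathbb{E}[A_{w,v}^2]=\mathcal{O}(n)$ for each pair $(w,v)$: the cross terms then follow from Cauchy--Schwarz, $|\mathbb{E}[A_{w,v}A_{w',v'}]|\le\sqrt{\mathbb{E}[A_{w,v}^2]\mathbb{E}[A_{w',v'}^2]}$, and summing over the finitely many index tuples costs only a constant factor.

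To bound $\mathbb{E}[A_{w,v}^2]$ I would use $0\le\rv{X}{i,w}\le 1$ to get the pathwise estimate $|A_{w,v}|\le\sum_{i=1}^{n}|\rv{Y}{i,v}-\rv{Y}{n,v}|$, whence, applying Cauchy--Schwarz inside the expectation, $\mathbb{E}[A_{w,v}^2]\le\mathbb{E}[(\sum_{i=1}^{n}|\rv{Y}{i,v}-\rv{Y}{n,v}|)^2]\le(\sum_{i=1}^{n}\sqrt{\mathbb{E}[(\rv{Y}{i,v}-\rv{Y}{n,v})^2]})^2$. The remaining ingredient is a computation in the spirit of the proof sketch of Theorem~\ref{theorem:first_moment}: from $\rv{Y}{i,v}=\frac{1}{i}\sum_{i'=1}^{i}\rv{X}{i',v}$ and the same moment structure assumed there (stationarity with uncorrelated positions) one obtains $\mathbb{E}[\rv{Y}{i,v}\rv{Y}{j,v}]=\mu_v^2+\var{v,v}/\max(i,j)$, hence $\mathbb{E}[(\rv{Y}{i,v}-\rv{Y}{n,v})^2]=\var{v,v}(\frac{1}{i}-\frac{1}{n})\le\var{v,v}/i$. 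Plugging this in and using the upper integral bound $\sum_{i=1}^{n}i^{-1/2}=\mathcal{O}(\sqrt{n})$ gives $\mathbb{E}[A_{w,v}^2]=\mathcal{O}(n)$, and therefore $\mathbb{E}[\Delta\mathcal{L}(\theta)^2]=\frac{4c^2k^2}{n^2}\mathcal{O}(n)=\mathcal{O}(1/n)$, which is in particular $\mathcal{O}(\log(n)/n)$ and forces $\lim_{n\to\infty}\mathbb{E}[\Delta\mathcal{L}(\theta)^2]=0$.

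The step I expect to be the real obstacle is precisely the loss of linearity: a naive expansion of $\mathbb{E}[\Delta\mathcal{L}(\theta)^2]$ produces fourth-order moments of the $\rv{X}{i,w}$'s and, more annoyingly, the correlation between $\rv{X}{i,w}$ and $\rv{Y}{i,v}$ (the latter contains $\rv{X}{i,v}$), and tracking all of these exactly would be tedious and would require a fourth-moment hypothesis. The device that keeps the argument short is to discard the $\rv{X}{i,w}$ factors altogether (bounding them by $1$) before taking expectations, so everything reduces to $\mathbb{E}[(\rv{Y}{i,v}-\rv{Y}{n,v})^2]$, which was essentially already computed for Theorem~\ref{theorem:first_moment}. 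The one point to check is that this discarding does not overshoot the target rate --- it does not, since $(\sum_{i\le n}i^{-1/2})^2=\mathcal{O}(n)$ and dividing by $n^2$ still leaves $\mathcal{O}(1/n)$ --- and that the hidden constant is genuinely independent of $n$, depending only on $c$, $k$, $\vocabsize$, $\max_{w,v}|\psi_{w,v}^{-}|$, and $\max_v\var{v,v}$, all fixed once $\theta$ and the stationary distribution are fixed.
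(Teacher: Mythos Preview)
Your argument is correct, and it differs from the paper's in a way worth noting. The paper first applies Jensen's inequality to the full triple sum over $(w,v,i)$ to reduce to the diagonal terms $\mathbb{E}[\Psi_{i,n,w,v}^2]=(\psi_{w,v}^{-})^2\,\mathbb{E}[\rv{X}{i,w}(\rv{Y}{i,v}-\rv{Y}{n,v})^2]$, and then, to handle the third-order moments $\mathbb{E}[\rv{X}{i,w}\rv{Y}{j,v}\rv{Y}{k,v}]$ that arise, proves a separate lemma bounding these by splitting the index set $\{(l,m)\}$ into the part where $i,l,m$ are pairwise distinct (so full independence factors the expectation) and its complement. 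Summing over $i$ leaves an $H_n$ term and yields the stated $\mathcal{O}(\log(n)/n)$. Your route avoids that lemma entirely: by discarding the $\rv{X}{i,w}$ factor pathwise via $0\le \rv{X}{i,w}\le 1$ and then applying Minkowski in $L^2$ over $i$, you reduce everything to the exact second moment $\mathbb{E}[(\rv{Y}{i,v}-\rv{Y}{n,v})^2]=\var{v,v}(1/i-1/n)$, which uses only the covariance structure already exploited in Theorem~\ref{theorem:first_moment}. This is both shorter and sharper: because $\sum_{i\le n} i^{-1/2}=\mathcal{O}(\sqrt{n})$ rather than $\sum_{i\le n} i^{-1}=\mathcal{O}(\log n)$, you end up with $\mathcal{O}(1/n)$ instead of $\mathcal{O}(\log(n)/n)$. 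The price is that you throw away the dependence on $\rv{X}{i,w}$, so nothing word-specific survives in the constant, but for the purpose of the theorem (and of Lemma~4 downstream) this costs nothing.
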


\subsubsection{Main result} \label{subsec:main_result}

The above theorems reveal the relationship between the optimal solutions of the two objectives, as stated in the next lemma.
\begin{lemma}
Let $\theta^*$ and $\hat{\theta}$ be the optimal solutions of $\mathcal{L}_\text{B}(\theta)$ and $\mathcal{L}_\text{I}(\theta)$, respectively: $\theta^{*}=\arg\min_{\theta}\mathcal{L}_\text{B}(\theta)$ and $\hat{\theta}=\arg\min_{\theta}\mathcal{L}_\text{I}(\theta)$. Then,
\begin{align}
\lim_{n\rightarrow\infty}\mathbb{E}[\mathcal{L}_\text{B}(\hat{\theta})-\mathcal{L}_\text{B}(\theta^*)]&=0,\label{eq:expect-diff}\\
\lim_{n\rightarrow\infty}\mathbb{V}[\mathcal{L}_\text{B}(\hat{\theta})-\mathcal{L}_\text{B}(\theta^*)]&=0.\label{eq:var-diff}
\end{align}
\begin{proof}
The proof is made by the squeeze theorem. Let $\l=\mathcal{L}_\text{B}(\hat{\theta})-\mathcal{L}_\text{B}(\theta^*)$. The optimality of $\theta^*$ gives $0\leq\l$. Also, the optimality of $\hat{\theta}$ gives
\begin{align}
\l&=\mathcal{L}_\text{B}(\hat{\theta})-\mathcal{L}_\text{I}(\theta^{*})+\mathcal{L}_\text{I}(\theta^{*})-\mathcal{L}_\text{B}(\theta^*)\nonumber\\
&\leq\mathcal{L}_\text{B}(\hat{\theta})-\mathcal{L}_\text{I}(\hat{\theta})+\mathcal{L}_\text{I}(\theta^*)-\mathcal{L}_\text{B}(\theta^*)\nonumber\\
    &=\Delta\mathcal{L}(\hat{\theta})-\Delta\mathcal{L}(\theta^*).\nonumber
\end{align}
We thus have $0\leq\mathbb{E}[\l]\leq\mathbb{E}[\Delta\mathcal{L}(\hat{\theta})-\Delta\mathcal{L}(\theta^*)]$. Since Theorem \ref{theorem:first_moment_convergence} implies that the right hand side converges to zero when $n\rightarrow\infty$, the squeeze theorem gives Eq.~(\ref{eq:expect-diff}). Next, we have
\begin{align}
\mathbb{V}[\l]
&\!=\!\mathbb{E}[\l^2]\!-\!\mathbb{E}[\l]^2\!\leq\!\mathbb{E}[\l^2]\!\leq\!\mathbb{E}[(\Delta\mathcal{L}(\hat{\theta})\!-\!\Delta\mathcal{L}(\theta^*))^2]\nonumber\\
&\!\leq\!\mathbb{E}[(\Delta\mathcal{L}(\hat{\theta})\!-\!\Delta\mathcal{L}(\theta^*))^2
\!\!+\!\!(\Delta\mathcal{L}(\hat{\theta})\!+\!\Delta\mathcal{L}(\theta^{*}))^2]\nonumber\\
&\!=\!2\mathbb{E}[\Delta\mathcal{L}(\hat{\theta})^2]+2\mathbb{E}[\Delta\mathcal{L}(\theta^*)^2].\label{eq:upper}
\end{align}
Theorem \ref{theorem:second_moment_convergence} suggests that Eq.~(\ref{eq:upper}) converges to zero when $n\rightarrow\infty$. Also, the non-negativity of the variance gives $0\leq\mathbb{V}[\l]$. Therefore, the squeeze theorem gives Eq.~(\ref{eq:var-diff}).
\end{proof}
\end{lemma}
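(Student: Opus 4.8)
The plan is to establish both limits by a squeeze (sandwich) argument, with Theorems~\ref{theorem:first_moment_convergence} and~\ref{theorem:second_moment_convergence} supplying the vanishing upper bounds. Write $\ell=\mathcal{L}_\text{B}(\hat{\theta})-\mathcal{L}_\text{B}(\theta^*)$. Optimality of $\theta^*$ for $\mathcal{L}_\text{B}$ gives the trivial lower bound $\ell\geq 0$, hence $\mathbb{E}[\ell]\geq 0$ and (by non-negativity of variance) $\mathbb{V}[\ell]\geq 0$. For the matching upper bound I would insert and cancel $\mathcal{L}_\text{I}(\theta^*)$ and then use optimality of $\hat{\theta}$ for $\mathcal{L}_\text{I}$ (so $\mathcal{L}_\text{I}(\hat{\theta})\leq\mathcal{L}_\text{I}(\theta^*)$), obtaining $\ell\leq\bigl(\mathcal{L}_\text{B}(\hat{\theta})-\mathcal{L}_\text{I}(\hat{\theta})\bigr)-\bigl(\mathcal{L}_\text{B}(\theta^*)-\mathcal{L}_\text{I}(\theta^*)\bigr)=\Delta\mathcal{L}(\hat{\theta})-\Delta\mathcal{L}(\theta^*)$.

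For Eq.~(\ref{eq:expect-diff}), take expectations of $0\leq\ell\leq\Delta\mathcal{L}(\hat{\theta})-\Delta\mathcal{L}(\theta^*)$; by linearity the right side is $\mathbb{E}[\Delta\mathcal{L}(\hat{\theta})]-\mathbb{E}[\Delta\mathcal{L}(\theta^*)]$, each term being $\mathcal{O}(\log(n)/n)\to 0$ by Theorem~\ref{theorem:first_moment_convergence}, so the squeeze theorem finishes this part. For Eq.~(\ref{eq:var-diff}), bound $\mathbb{V}[\ell]=\mathbb{E}[\ell^2]-\mathbb{E}[\ell]^2\leq\mathbb{E}[\ell^2]$, and since $0\leq\ell\leq\Delta\mathcal{L}(\hat{\theta})-\Delta\mathcal{L}(\theta^*)$ implies $\ell^2\leq(\Delta\mathcal{L}(\hat{\theta})-\Delta\mathcal{L}(\theta^*))^2$, it remains to show the latter has vanishing expectation. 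The cross term is the only nuisance: I would eliminate it by adding the non-negative quantity $(\Delta\mathcal{L}(\hat{\theta})+\Delta\mathcal{L}(\theta^*))^2$ and applying the identity $(a-b)^2+(a+b)^2=2a^2+2b^2$, which replaces the bound by $2\mathbb{E}[\Delta\mathcal{L}(\hat{\theta})^2]+2\mathbb{E}[\Delta\mathcal{L}(\theta^*)^2]$. Both summands are $\mathcal{O}(\log(n)/n)\to 0$ by Theorem~\ref{theorem:second_moment_convergence}, and with $\mathbb{V}[\ell]\geq 0$ the squeeze theorem gives the claim.

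The step I expect to be the genuine obstacle is the legitimacy of plugging the \emph{random} argument $\hat{\theta}$ (and, to a lesser degree, $\theta^*$) into the moment estimates: Theorems~\ref{theorem:first_moment} and~\ref{theorem:second_moment_convergence} are proved for a \emph{fixed} $\theta$, whereas $\hat{\theta}$ is a function of the sampled training data, so in $\Delta\mathcal{L}(\hat{\theta})$ the randomness of $\psi_{w,v}^{-}=\log\sigma(-\tvec{w}\cdot\cvec{v})$ is entangled with that of the $\rv{X}{i,w}$. To make this rigorous I would need the $\mathcal{O}(\log(n)/n)$ rates to hold \emph{uniformly} over the relevant parameter region; this is unproblematic as long as the embeddings remain in a bounded set, since $\Delta\mathcal{L}(\theta)$ enters only through the then-bounded terms $\psi_{w,v}^{-}$ while the multiplicative factors $(H_n-1)/n$ and $\log(n)/n$ carry no $\theta$-dependence, but I would want to state this boundedness/compactness as an explicit hypothesis (or argue it from the structure of $\sigma$). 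Everything else reduces to routine manipulation of the inequalities above and linearity of expectation.
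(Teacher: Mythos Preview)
Your proposal is correct and follows essentially the same route as the paper: the same sandwich $0\leq\ell\leq\Delta\mathcal{L}(\hat{\theta})-\Delta\mathcal{L}(\theta^*)$ obtained from the two optimality conditions, the same appeal to Theorem~\ref{theorem:first_moment_convergence} for the expectation, and the same $(a-b)^2+(a+b)^2=2a^2+2b^2$ trick combined with Theorem~\ref{theorem:second_moment_convergence} for the variance. The concern you flag about inserting the data-dependent $\hat{\theta}$ into moment bounds proved for fixed $\theta$ is a genuine subtlety that the paper's own proof does not address either; your proposed fix via a uniform bound over a compact parameter set (so that $|\psi_{w,v}^{-}|$ is uniformly bounded and the $\theta$-independent rate $\mathcal{O}(\log(n)/n)$ carries through) is the natural way to make the argument rigorous.
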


We are now ready to provide the main result of the analysis. The next theorem shows the convergence of $\mathcal{L}_\text{B}(\hat{\theta})$.
\begin{theorem}
$\mathcal{L}_\text{B}(\hat{\theta})$ converges in probability to $\mathcal{L}_{\text{B}}(\theta^*)$:
\begin{align*}
\forall\epsilon>0, \lim_{n\rightarrow\infty}\Pr\biggl[|\mathcal{L}_\text{B}(\hat{\theta})-\mathcal{L}_\text{B}(\theta^*)|\geq\epsilon\biggr]=0.
\end{align*}
\end{theorem}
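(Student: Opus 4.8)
The plan is to derive convergence in probability directly from the two moment statements in the preceding Lemma, using Chebyshev's inequality. Write $\ell = \mathcal{L}_\text{B}(\hat{\theta}) - \mathcal{L}_\text{B}(\theta^*)$, which is a nonnegative random variable by the optimality of $\theta^*$. The Lemma already establishes that $\mathbb{E}[\ell] \to 0$ and $\mathbb{V}[\ell] \to 0$ as $n \to \infty$, so it suffices to package these two facts into a tail bound.

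First I would fix an arbitrary $\epsilon > 0$. Since $\mathbb{E}[\ell] \to 0$, for all sufficiently large $n$ we have $\mathbb{E}[\ell] < \epsilon/2$, and because $\ell \geq 0$ the event $\{\ell \geq \epsilon\}$ is contained in $\{|\ell - \mathbb{E}[\ell]| \geq \epsilon/2\}$. Then apply Chebyshev's inequality to bound
\begin{align*}
\Pr\bigl[|\ell| \geq \epsilon\bigr] \leq \Pr\Bigl[|\ell - \mathbb{E}[\ell]| \geq \tfrac{\epsilon}{2}\Bigr] \leq \frac{4\,\mathbb{V}[\ell]}{\epsilon^2}.
\end{align*}
Taking $n \to \infty$ and invoking Eq.~(\ref{eq:var-diff}) from the Lemma, the right-hand side tends to zero, which is exactly the claimed statement (note $|\mathcal{L}_\text{B}(\hat{\theta}) - \mathcal{L}_\text{B}(\theta^*)| = \ell$ since $\ell \geq 0$).

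An even shorter route, which I would mention as the cleanest option, is to skip Chebyshev entirely and use Markov's inequality on the nonnegative variable $\ell$ directly: $\Pr[\ell \geq \epsilon] \leq \mathbb{E}[\ell]/\epsilon \to 0$ by Eq.~(\ref{eq:expect-diff}). This uses only the first-moment convergence and makes the second-moment result of the Lemma unnecessary for this particular theorem. I expect there is no real obstacle here — the theorem is essentially a corollary of the Lemma, and the only thing to be careful about is the elementary observation that convergence in $L^1$ (or $L^2$) of a sequence implies convergence in probability, together with exploiting nonnegativity of $\ell$ to identify $|\ell|$ with $\ell$. I would present the Markov-inequality argument as the main proof and perhaps remark that the variance bound gives a quantitative rate.
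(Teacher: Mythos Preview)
Your proposal is correct. The Chebyshev route you sketch is essentially the paper's own argument: the paper also writes $\ell=\mathcal{L}_\text{B}(\hat{\theta})-\mathcal{L}_\text{B}(\theta^*)$, applies Chebyshev, uses $\mathbb{E}[\ell]\to 0$ to absorb the centering, and invokes $\mathbb{V}[\ell]\to 0$ to squeeze the tail probability to zero. Your $\epsilon/2$ bookkeeping is a slightly cleaner version of the paper's two-parameter $\epsilon_1,\epsilon_2$ setup, but the structure is identical.

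Your Markov-inequality alternative, however, is a genuinely simpler route that the paper does not take. Because $\ell\geq 0$ (from the optimality of $\theta^*$, established in the Lemma), Markov gives $\Pr[\ell\geq\epsilon]\leq\mathbb{E}[\ell]/\epsilon$, and Eq.~(\ref{eq:expect-diff}) alone finishes the proof. This bypasses the variance statement entirely and shows that, for this particular theorem, Eq.~(\ref{eq:var-diff}) and hence Theorem~\ref{theorem:second_moment_convergence} are not needed. What the paper's approach buys in exchange is that it does not rely on the sign of $\ell$ and would still work if one only knew $\mathbb{E}[\ell]\to 0$ and $\mathbb{V}[\ell]\to 0$ without a pointwise lower bound; but since nonnegativity is already part of the Lemma, your Markov argument is the more economical choice here. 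Your remark that the second-moment bound could still be cited for a quantitative rate is a fair way to salvage its relevance.
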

\begin{proof}
Let again $\l=\mathcal{L}_\text{B}(\hat{\theta})-\mathcal{L}_\text{B}(\theta^*)$. Then, Chebyshev's inequality gives, for any $\epsilon_1>0$, 
\begin{align*}
\lim_{n\rightarrow\infty}\frac{\mathbb{V}[\l]}{\epsilon_1^2}
\geq\lim_{n\rightarrow\infty}\Pr\biggl[|\l-\mathbb{E}[\l]|\geq\epsilon_1\biggr].
\end{align*}
Remember that Eq.~(\ref{eq:expect-diff}) means that for any $\epsilon_2>0$, there exists $n'$ such that if $n'\leq n$ then $|\mathbb{E}[\l]|<\epsilon_2$. Therefore, we have
\begin{align*}
\lim_{n\rightarrow\infty}\frac{\mathbb{V}[\l]}{\epsilon_1^2}\geq\lim_{n\rightarrow\infty}\Pr\biggl[|\l|\geq\epsilon_1+\epsilon_2\biggr]\geq 0.
\end{align*}
The arbitrary property of $\epsilon_1$ and $\epsilon_2$ allows $\epsilon_1+\epsilon_2$ to be rewritten as $\epsilon$. Also, Eq.~(\ref{eq:var-diff}) implies that $\lim_{n\rightarrow\infty}\frac{\mathbb{V}[\l]}{\epsilon_1^2}=0$. This completes the proof.
\end{proof}
\noindent
Informally, this theorem can be interpreted as suggesting that the optimal solutions of batch and incremental SGNS agree when $n$ is infinitely large.

\subsection{Smoothed case} \label{subsec:smoothed_case}

We next examine the smoothed case ($0<\alpha< 1$). In this case, the noise distribution can be represented by using the ones in the unsmoothed case:
\begin{align*}
q_i(w)=\frac{f_i(w)^{\alpha}}{\sum_{w'\in\vocab}f_i(w')^{\alpha}}
=\frac{\bigl(\frac{f_i(w)}{F_i}\bigr)^{\alpha}}{\sum_{w'\in\vocab}\bigl(\frac{f_i(w')}{F_i}\bigr)^{\alpha}}
\end{align*}
where $F_i=\sum_{w'\in\vocab}f_i(w')$ and $\frac{f_i(w)}{F_i}$ corresponds to the noise distribution in the unsmoothed case.
\begin{definition}
Let $\rv{Z}{i,w}$ be a random variable that represents $q_i(w)$ in the smoothed case. Then, it can be written by using $\rv{Y}{i,w}$:
\begin{align*}
\rv{Z}{i,w}=g_w(\rv{Y}{i,1},\rv{Y}{i,2},\dots,\rv{Y}{i,\vocabsize})
\end{align*}
where $g_w(x_1,x_2,\dots,x_{\vocabsize})=\frac{x_w^{\alpha}}{\sum_{w'\in\vocab}x_{w'}^{\alpha}}$.
\end{definition}

Because $\rv{Z}{i,w}$ is no longer a linear combination of $\rv{X}{i,w}$, it becomes difficult to derive similar proofs to the unsmoothed case. To address this difficulty, $\rv{Z}{i,w}$ is approximated by the first-order Taylor expansion around
\[
\mathbb{E}[(\rv{Y}{i,1},\rv{Y}{i,2},\dots,\rv{Y}{i,\vocabsize})]=(\mu_1,\mu_2,\dots,\mu_{\vocabsize}).
\]
The first-order Taylor approximation gives
\begin{align*}
\rv{Z}{i,w}\approx g_w(\mu)+\sum_{v\in\vocab}M_{w,v}(\rv{Y}{i,v}-g_{v}(\mu))
\end{align*}
where $\mu=(\mu_1,\mu_2,\dots,\mu_{\vocabsize})$ and $M_{w,v}=\frac{\partial g_w(x)}{\partial x_v}|_{x=\mu}$. Consequently, it can be shown that the first and second order moments of $\Delta\mathcal{L}(\theta)$ have the order of $\mathcal{O}(\frac{\log(n)}{n})$ in the smoothed case as well. See Appendix~C for the details.

\subsection{Mini-batch SGNS} \label{subsec:minibatch}

The same analysis result can also be obtained for the mini-batch SGNS\@. We can prove Theorems 2 and 3 in the mini-batch case as well (see Appendix~D for the proof). The other part of the analysis remains the same.

\section{Experiments}

Three experiments were conducted to investigate the correctness of the theoretical analysis (Section~\ref{subsec:exp1}) and the practical usefulness of incremental SGNS (Sections~\ref{subsec:exp2} and \ref{subsec:exp3}). Details of the experimental settings that do not fit into the paper are presented in Appendix E.

\subsection{Validation of theorems} \label{subsec:exp1}

An empirical experiment was conducted to validate the result of the theoretical analysis. Since it is difficult to assess the main result in Section \ref{subsec:main_result} directly, the theorems in Sections \ref{subsec:moment_convergence}, from which the main result is readily derived, were investigated. Specifically, the first and second order moments of $\Delta\mathcal{L}(\theta)$ were computed on datasets of increasing sizes to empirically investigate the convergence property.

Datasets of various sizes were constructed from the English Gigaword corpus \cite{Napoles12}. The datasets made up of $n$ words were constructed by randomly sampling sentences from the Gigaword corpus. The value of $n$ was varied over $\{10^3,10^4,10^5,10^6,10^7\}$. $10,000$ different datasets were created for each size $n$ to compute the first and second order moments.

Figure \ref{fig:moment} (top left) shows log-log plots of the first order moments of $\Delta\mathcal{L}(\theta)$ computed on the different sized datasets when $\alpha=1.0$. The crosses and circles represent the empirical values and theoretical values obtained by Theorem 1, respectively. Figure \ref{fig:moment} (top right) similarly illustrates the second order moments of $\Delta\mathcal{L}(\theta)$. Since Theorem~\ref{theorem:second_moment_convergence} suggests that the second order moment decreases in the order of $\mathcal{O}(\frac{\log(n)}{n})$, the graph $y\propto\frac{\log(x)}{x}$ is also shown. The graph was fitted to the empirical data by minimizing the squared error.

The top left figure demonstrates that the empirical values of the first order moments fit the theoretical result very well, providing a strong empirical evidence for the correctness of Theorem~\ref{theorem:first_moment}. In addition, the two figures show that the first and second order moments decrease almost in the order of $\mathcal{O}(\frac{\log(n)}{n})$, converging to zero as the data size increases. This result validates Theorems~\ref{theorem:first_moment_convergence} and \ref{theorem:second_moment_convergence}.

Figures~\ref{fig:moment}  (bottom left) and (bottom right) show similar results when $\alpha=0.75$. Since we do not have theoretical estimates of the first order moment when $\alpha\not=1.0$, the graphs $y\propto\frac{\log(n)}{n}$ are shown in both figures. From these, we can again observe that the first and second order moments decrease almost in the order of $\mathcal{O}(\frac{\log(n)}{n})$. This indicates the validity of the investigation in Section~\ref{subsec:smoothed_case}. The relatively larger deviations from the graphs $y\propto\frac{\log(n)}{n}$, compared with the top right figure, are considered to be attributed to the first-order Taylor approximation.

\begin{figure}[t]
\centering
\begin{tabular}[2]{@{}c@{}c@{}}
\includegraphics[width=0.25\textwidth]{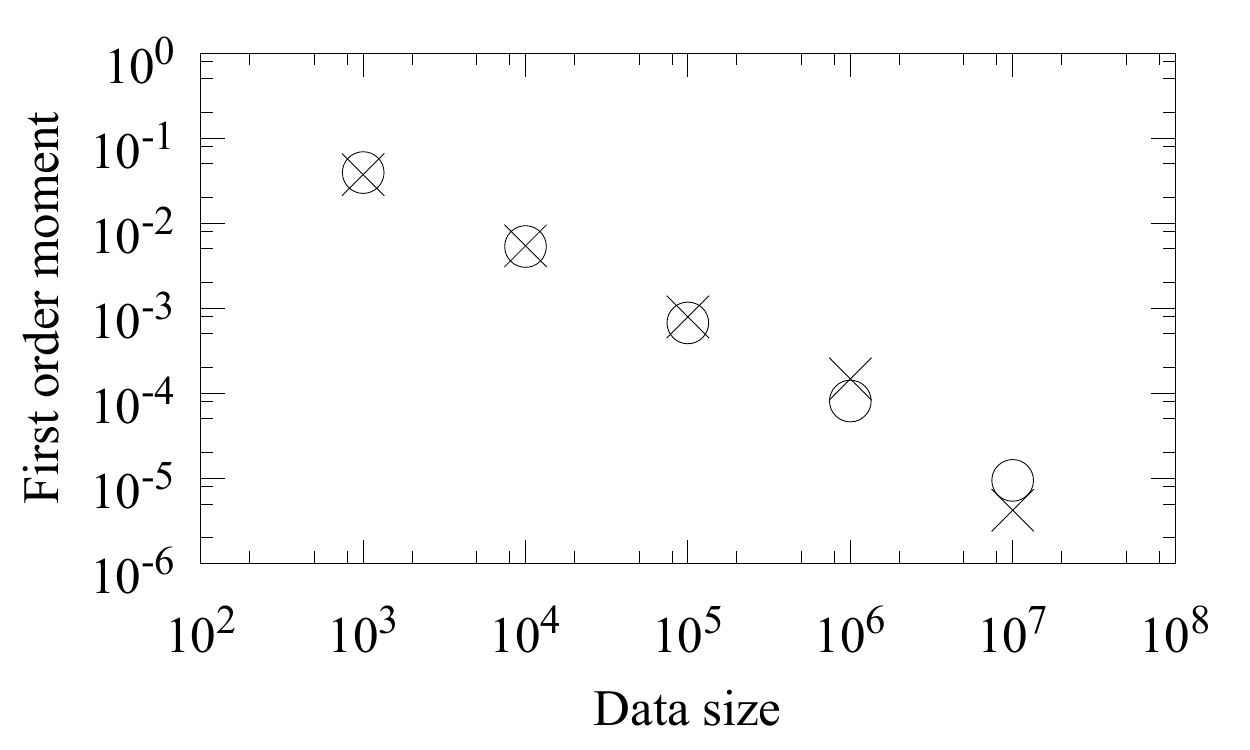} &
\includegraphics[width=0.25\textwidth]{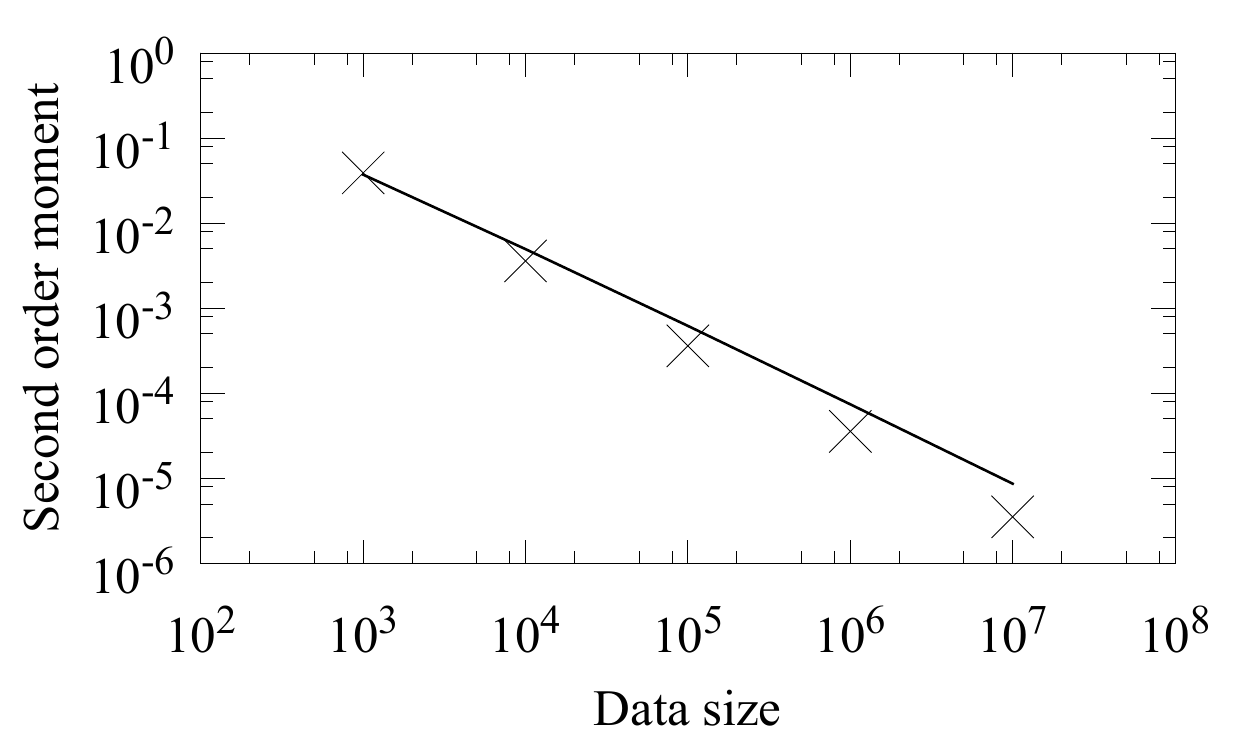} \\
\includegraphics[width=0.25\textwidth]{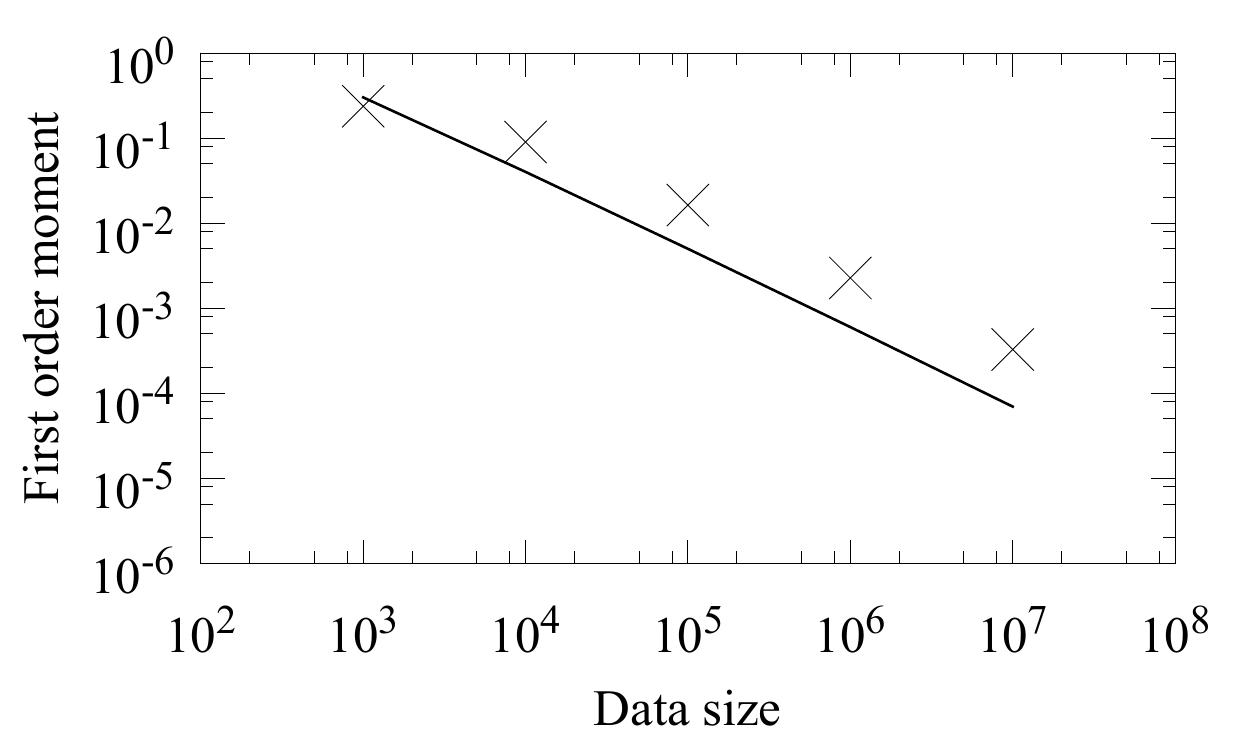} &
\includegraphics[width=0.25\textwidth]{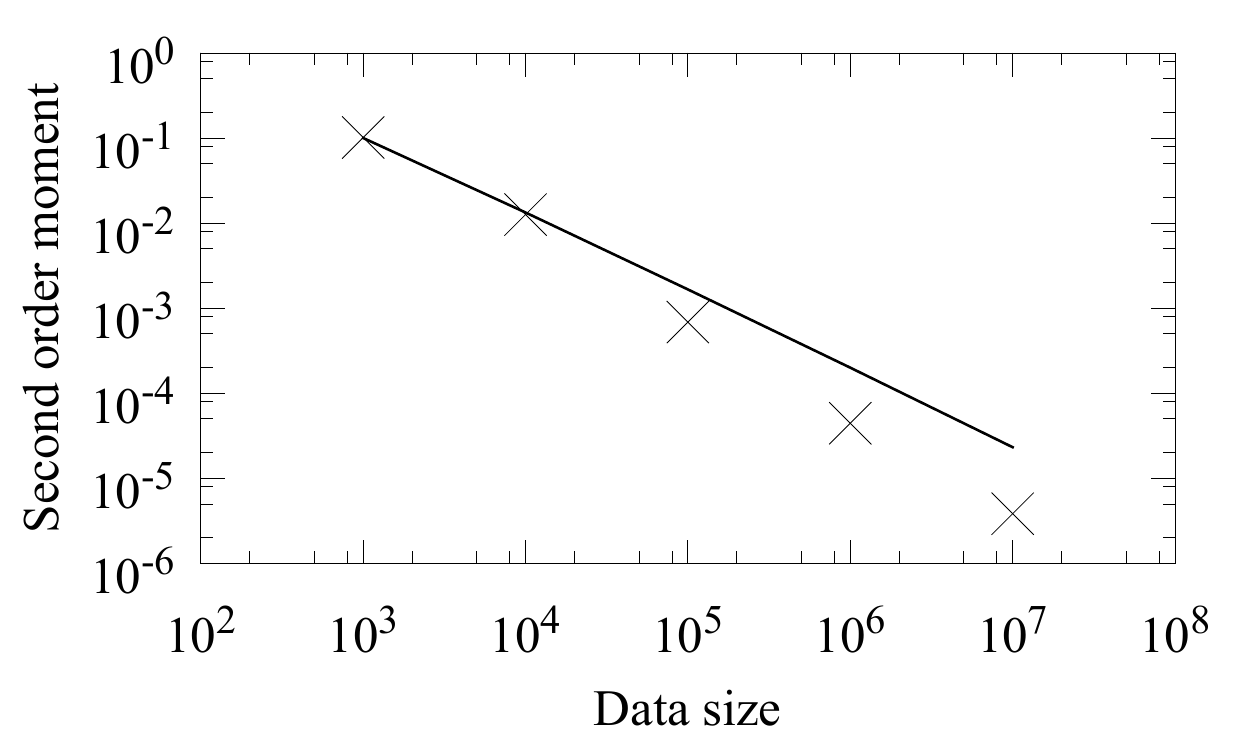} 
\end{tabular}  
\caption{Log-log plots of the first and second order moments of $\Delta\mathcal{L}(\theta)$ on the different sized datasets when $\alpha=1.0$ (top left and top right) and $\alpha=0.75$ (bottom left and bottom right).}
\label{fig:moment}
\end{figure}

\subsection{Quality of word embeddings} \label{subsec:exp2}

The next experiment investigates the quality of the word embeddings learned by incremental SGNS through comparison with the batch counterparts.

The Gigaword corpus was used for the training. For the comparison, both our own implementation of batch SGNS as well as \textsc{word2vec} \cite{Mikolov13a} were used (denoted as \textbf{batch} and \textbf{w2v}). The training configurations of the three methods were set the same as much as possible, although it is impossible to do so perfectly. For example, incremental SGNS (denoted as \textbf{incremental}) utilized the dynamic vocabulary (\textit{c.f.}, Section \ref{subsec:dynamic_vocab}) and thus we set the maximum vocabulary size $m$ to control the vocabulary size. On the other hand, we set a frequency threshold to determine the vocabulary size of \textbf{w2v}. We set $m=240$k for \textbf{incremental}, while setting the frequency threshold to $100$ for \textbf{w2v}. This yields vocabulary sets of comparable sizes: $220,389$ and $246,134$. 

The learned word embeddings were assessed on five benchmark datasets commonly used in the literature \cite{Levy15}: WordSim353 \cite{Agirre09}, MEN \cite{Bruni13}, SimLex-999 \cite{Hill15}, the MSR analogy dataset \cite{Mikolov13a}, the Google analogy dataset \cite{Mikolov13b}. The former three are for a semantic similarity task, and the remaining two are for a word analogy task. As evaluation measures, Spearman's $\rho$ and prediction accuracy were used in the two tasks, respectively.

Figures~\ref{fig:result} (a) and (b) represent the results on the similarity datasets and the analogy datasets. We see that the three methods (\textbf{incremental}, \textbf{batch}, and \textbf{w2v}) perform equally well on all of the datasets. This indicates that incremental SGNS can learn as good word embeddings as the batch counterparts, while being able to perform incremental model update. Although \textbf{incremental} performs slightly better than the batch methods in some datasets, the difference seems to be a product of chance.

The figures also show the results of incremental SGNS when the maximum vocabulary size $m$ was reduced to $150$k and $100$k (\textbf{incremental-150k} and \textbf{incremental-100k}). The resulting vocabulary sizes were $135,447$ and $86,993$, respectively. We see that \textbf{incremental-150k} and \textbf{incremental-100k} perform comparatively well with \textbf{incremental}, although relatively large performance drops are observed in some datasets (MEN and MSR). This demonstrates that the Misra-Gries algorithm can effectively control the vocabulary size.

\begin{figure*}[t]
\centering
\begin{tabular}[3]{@{}c@{}c@{}c@{}}
 \includegraphics[width=0.33\textwidth]{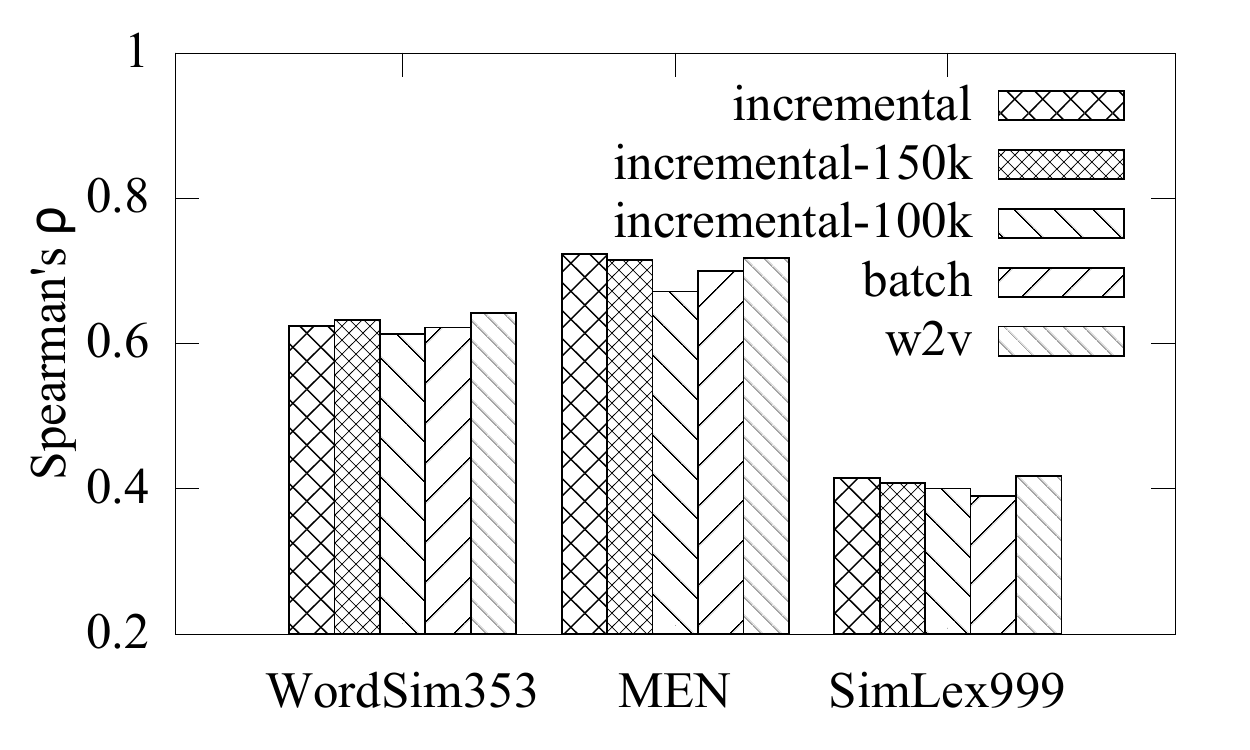} &
 \includegraphics[width=0.33\textwidth]{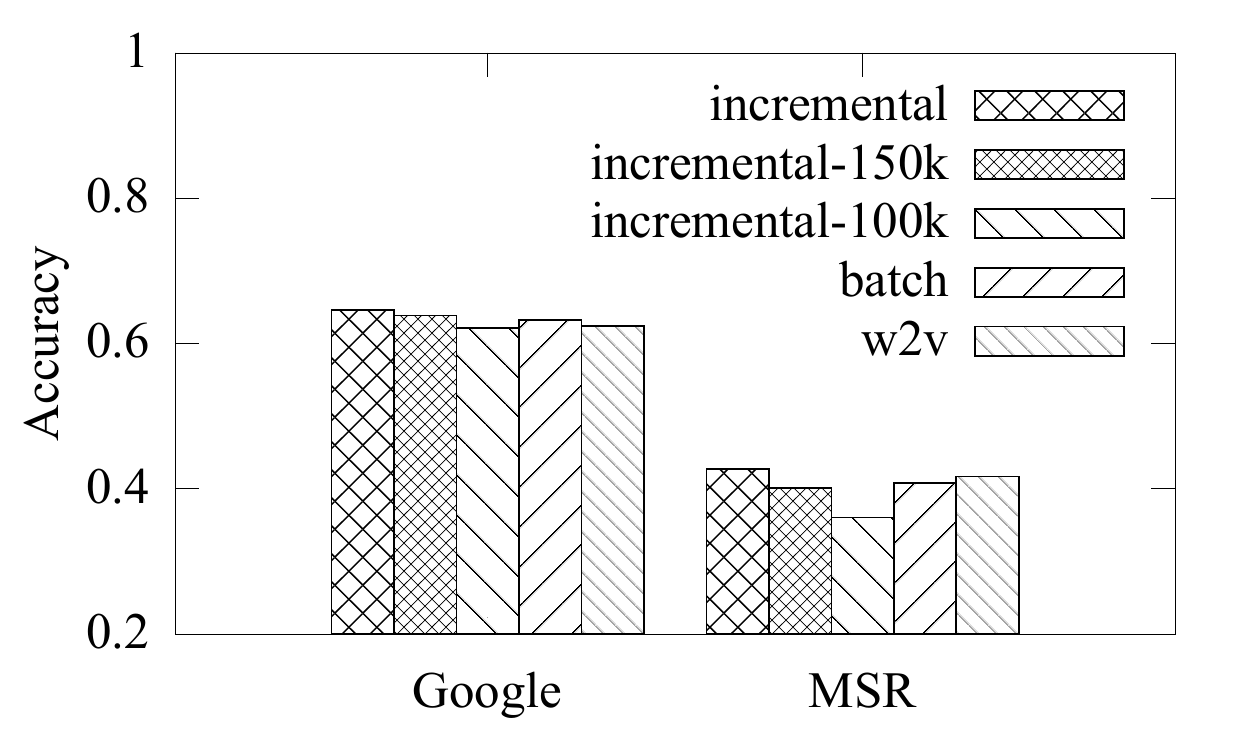} &
\includegraphics[width=0.33\textwidth]{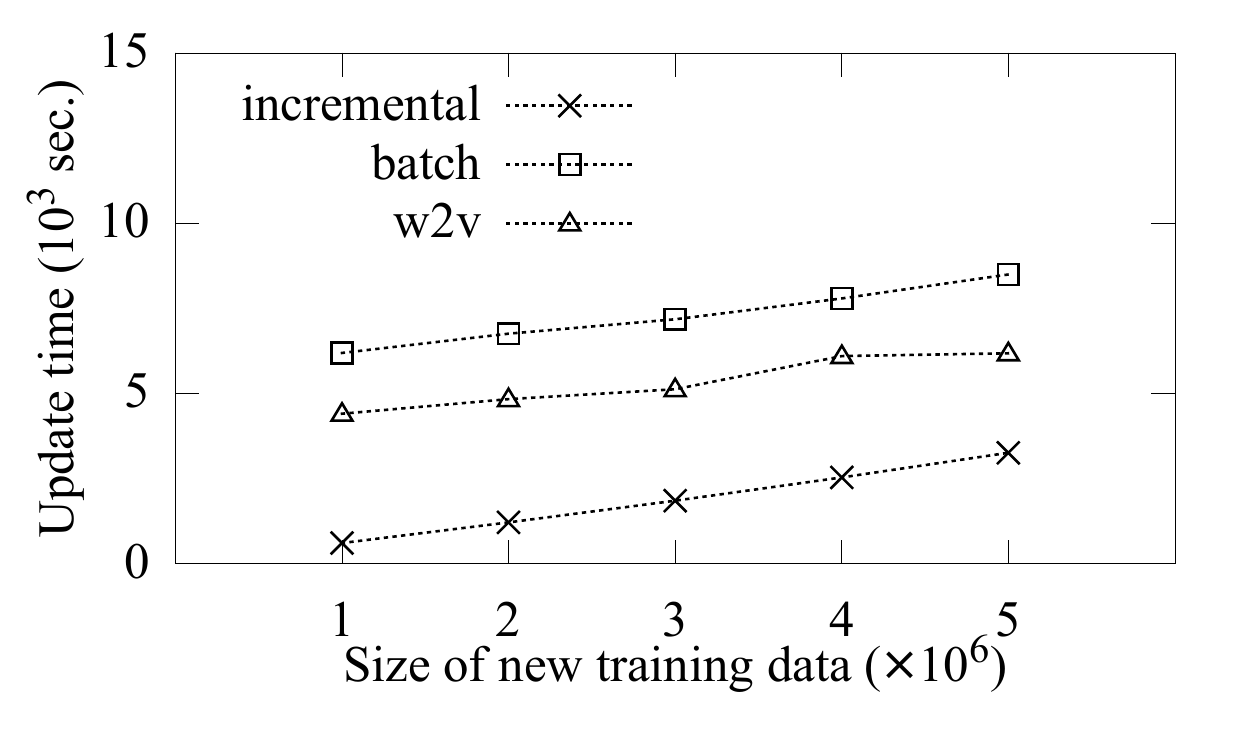} \\
(a) & (b) & (c) \\
\end{tabular}  
\caption{(a): Spearman's $\rho$ on the word similarity datasets. (b): Accuracy on the analogy datasets. (c): Update time when new training data is provided.}
\label{fig:result}
\end{figure*}

\subsection{Update time} \label{subsec:exp3}

The last experiment investigates how much time incremental SGNS can save by avoiding re-training when updating the word embeddings.

In this experiment, \textbf{incremental} was first trained on the initial training data of size%
\footnote{The number of sentences here.} $n_1$ and then updated on the new training data of size $n_2$ to measure the update time. For comparison, \textbf{batch} and \textbf{w2v} were re-trained on the combination of the initial and new training data. We fixed $n_1=10^7$ and varied $n_2$ over $\{1\times10^6,  2\times10^6,\dots, 5\times10^6\}$.

Figure~\ref{fig:result} (c) compares the update time of the three methods across various values of $n_2$. We see that \textbf{incremental} significantly reduces the update time. It achieves $10$ and $7.3$ times speed-up compared with \textbf{batch} and \textbf{w2v} (when $n_2=10^6$). This represents the advantage of the incremental algorithm, as well as the time efficiency of the dynamic vocabulary and adaptive unigram table. We note that \textbf{batch} is slower than \textbf{w2v} because it uses AdaGrad, which maintains different learning rates for different dimensions of the parameter, while \textbf{w2v} uses the same learning rate for all dimensions.

\section{Related Work}

Word representations based on distributional semantics have been common \cite{Turney10,Baroni10}. The distributional methods typically begin by constructing a word-context matrix and then applying dimension reduction techniques such as SVD to obtain high-quality word meaning representations. Although some investigated incremental updating of the word-context matrix \cite{Yin15,Goyal11}, they did not explore the reduced representations.
On the other hand, neural word embeddings have recently gained much popularity as an alternative. However, most previous studies have not explored incremental strategies \cite{Mikolov13b,Mikolov13c,Pennington14}.

Very recently, \citet{Peng17} proposed an incremental learning method of hierarchical soft-max. Because hierarchical soft-max and negative sampling have different advantages \cite{Peng17}, the incremental SGNS and their method are complementary to each other. Also, their updating method needs to scan not only new but also old training data, and thus is not an incremental algorithm in a strict sense. As a consequence, it potentially incurs the same time complexity as the re-training. 
Another consequence is that their method has to retain the old training data and thus wastes space, while incremental SGNS can discard old training examples after processing them.

There are publicly available implementations for training SGNS, one of the most popular being \textsc{word2vec} \cite{Mikolov}. However, it does not support an incremental training method. \textsc{Gensim} \cite{Rehurek10} also offers SGNS training. Although \textsc{Gensim} allows the incremental updating of SGNS models, it is done in an ad-hoc manner. In \textsc{Gensim}, the vocabulary set as well as the unigram table are fixed once trained, meaning that new words cannot be added. Also, they do not provide any theoretical accounts for the validity of their training method.

\section{Conclusion and Future Work}

This paper proposed incremental SGNS and provided thorough theoretical analysis to demonstrate its validity. We also conducted experiments to empirically demonstrate its effectiveness. Although the incremental model update is often required in practical machine learning applications, only a little attention has been paid to learning word embeddings incrementally. We consider that incremental SGNS successfully addresses this situation and serves as an useful tool for practitioners.

The success of this work suggests several research directions to be explored in the future. One possibility is to explore extending other embedding methods such as GloVe  \cite{Pennington14} to incremental algorithms. Such studies would further extend the potential of word embedding methods.

\bibliographystyle{emnlp_natbib}
\bibliography{emnlp2017}  

\begin{thebibliography}{}
\expandafter\ifx\csname natexlab\endcsname\relax\def\natexlab#1{#1}\fi

\bibitem[{Agirre et~al.(2009)Agirre, Alfonseca, Hall, Kravalova, Pasca, and
  Soroa}]{Agirre09}
Eneko Agirre, Enrique Alfonseca, Keith Hall, Jana Kravalova, Marius Pasca, and
  Aitor Soroa. 2009.
\newblock \href{http://www.aclweb.org/anthology/N/N09/N09-1003}{A study on
  similarity and relatedness using distributional and wordnet-based
  approaches}.
\newblock In {\em Proceedings of NAACL\/}. pages 19--27.
\newblock
  \href{http://www.aclweb.org/anthology/N/N09/N09-1003}{http://www.aclweb.org/anthology/N/N09/N09-1003}.

\bibitem[{Baroni and Lenci(2010)}]{Baroni10}
Marco Baroni and Alessandro Lenci. 2010.
\newblock \href{http://aclweb.org/anthology/J/J10/J10-4006}{Distributional
  memory: A general framework for corpus-based semantics}.
\newblock {\em Computatoinal Linguistics\/} 36:673--721.
\newblock
  \href{http://aclweb.org/anthology/J/J10/J10-4006}{http://aclweb.org/anthology/J/J10/J10-4006}.

\bibitem[{Bruni et~al.(2013)Bruni, Tran, and Baroni}]{Bruni13}
E.~Bruni, N.~K. Tran, and M.~Baroni. 2013.
\newblock Multimodal distributional semantics.
\newblock {\em Journal of Artificial Intelligence Research\/} 49:1--49.

\bibitem[{Duchi et~al.(2011)Duchi, Hazan, and Singer}]{Duchi11}
John Duchi, Elad Hazan, and Yoram Singer. 2011.
\newblock Adaptive subgradient methods for online learning and stochastic
  optimization.
\newblock {\em Journal of Machine Learning Research\/} 12:2121--2159.

\bibitem[{Efraimidis(2015)}]{Efraimidis15}
Pavlos~S. Efraimidis. 2015.
\newblock Weighted random sampling over data streams.
\newblock ArXiv:1012.0256.

\bibitem[{Goyal and Daume~III(2011)}]{Goyal11}
Amit Goyal and Hal Daume~III. 2011.
\newblock \href{http://www.aclweb.org/anthology/D11-1023}{Approximate scalable
  bounded space sketch for large data nlp}.
\newblock In {\em Proceedings of EMNLP\/}. pages 250--261.
\newblock
  \href{http://www.aclweb.org/anthology/D11-1023}{http://www.aclweb.org/anthology/D11-1023}.

\bibitem[{Hill et~al.(2015)Hill, Reichart, and Korhonen}]{Hill15}
Felix Hill, Roi Reichart, and Anna Korhonen. 2015.
\newblock \href{http://aclweb.org/anthology/J/J15/J15-4004}{Simlex-999:
  Evaluating semantic models with (genuine) similarity estimation}.
\newblock {\em Computational Linguistics\/} 41:665--695.
\newblock
  \href{http://aclweb.org/anthology/J/J15/J15-4004}{http://aclweb.org/anthology/J/J15/J15-4004}.

\bibitem[{Levy et~al.(2015)Levy, Goldberg, and Dagan}]{Levy15}
Omer Levy, Yoav Goldberg, and Ido Dagan. 2015.
\newblock
  \href{https://tacl2013.cs.columbia.edu/ojs/index.php/tacl/article/view/570}{Improving
  distributional similarity with lessons learned from word embeddings}.
\newblock {\em Transactions of the Association for Computational Linguistics\/}
  3:211--225.
\newblock
  \href{https://tacl2013.cs.columbia.edu/ojs/index.php/tacl/article/view/570}{https://tacl2013.cs.columbia.edu/ojs/index.php/tacl/article/view/570}.

\bibitem[{Mikolov(2013)}]{Mikolov}
Tomas Mikolov. 2013.
\newblock word2vec.
\newblock https://code.google.com/archive/p/word2vec.

\bibitem[{Mikolov et~al.(2013{\natexlab{a}})Mikolov, Chen, Corrado, and
  Dean}]{Mikolov13b}
Tomas Mikolov, Kai Chen, Greg Corrado, and Jeffrey Dean. 2013{\natexlab{a}}.
\newblock Efficient estimation of word representations in vector space.
\newblock In {\em Workshop at ICLR\/}.

\bibitem[{Mikolov et~al.(2013{\natexlab{b}})Mikolov, Sutskever, Chen, Corrado,
  and Dean}]{Mikolov13c}
Tomas Mikolov, Ilya Sutskever, Kai Chen, Greg~S Corrado, and Jeff Dean.
  2013{\natexlab{b}}.
\newblock Distributed representations of words and phrases and their
  compositionality.
\newblock In {\em Advances in NIPS\/}. pages 3111--3119.

\bibitem[{Mikolov et~al.(2013{\natexlab{c}})Mikolov, Yih, and
  Zweig}]{Mikolov13a}
Tomas Mikolov, Wen-Tau Yih, and Geoffrey Zweig. 2013{\natexlab{c}}.
\newblock \href{http://www.aclweb.org/anthology/N13-1090}{Linguistic
  regularities in continuous space word representations}.
\newblock In {\em Proceedings of NAACL\/}. pages 746--751.
\newblock
  \href{http://www.aclweb.org/anthology/N13-1090}{http://www.aclweb.org/anthology/N13-1090}.

\bibitem[{Misra and Gries(1982)}]{Misra82}
Jayadev Misra and David Gries. 1982.
\newblock Finding repeated elements.
\newblock {\em Science of Computer Programming\/} 2(2):143--152.

\bibitem[{Napoles et~al.(2012)Napoles, Gormley, and Durme}]{Napoles12}
Courtney Napoles, Matthew Gormley, and Benjamin~Van Durme. 2012.
\newblock Annotated english gigaword ldc2012t21.

\bibitem[{Peng et~al.(2017)Peng, Li, Song, and Liu}]{Peng17}
Hao Peng, Jianxin Li, Yangqiu Song, and Yaopeng Liu. 2017.
\newblock Incrementally learning the hierarchical softmax function for neural
  language models.
\newblock In {\em Proceedings of AAAI (to appear)\/}.

\bibitem[{Pennington et~al.(2014)Pennington, Socher, and
  Manning}]{Pennington14}
Jeffrey Pennington, Richard Socher, and Christopher Manning. 2014.
\newblock \href{http://www.aclweb.org/anthology/D14-1162}{Glove: Global vectors
  for word representation}.
\newblock In {\em Proceedings of EMNLP\/}. pages 1532--1543.
\newblock
  \href{http://www.aclweb.org/anthology/D14-1162}{http://www.aclweb.org/anthology/D14-1162}.

\bibitem[{{\v R}eh{\r u}{\v r}ek and Sojka(2010)}]{Rehurek10}
Radim {\v R}eh{\r u}{\v r}ek and Petr Sojka. 2010.
\newblock Software framework for topic modelling with large corpora.
\newblock In {\em {Proceedings of the LREC 2010 Workshop on New Challenges for
  NLP Frameworks}\/}. pages 45--50.

\bibitem[{Turney and Pantel(2010)}]{Turney10}
Peter~D. Turney and Patrick Pantel. 2010.
\newblock From frequency to meaning: Vector space models of semantics.
\newblock {\em Journal of Artificial Intelligence Research\/} 37:141--188.

\bibitem[{Vitter(1985)}]{Vitter85}
Jeffrey~S. Vitter. 1985.
\newblock Random sampling with a reservoir.
\newblock {\em ACM Transactions on Mathematical Software\/} 11:37--57.

\bibitem[{Yin et~al.(2015)Yin, Schnabel, and Sch\"{u}tze}]{Yin15}
Wenpeng Yin, Tobias Schnabel, and Hinrich Sch\"{u}tze. 2015.
\newblock \href{http://aclweb.org/anthology/D15-1155}{Online updating of word
  representations for part-of-speech tagging}.
\newblock In {\em Proceedings of EMNLP\/}. pages 1329--1334.
\newblock
  \href{http://aclweb.org/anthology/D15-1155}{http://aclweb.org/anthology/D15-1155}.

\end{thebibliography}

\onecolumn
\appendix
\section{Note on Adaptive Unigram Table}

Algorithm \ref{alg:adaptive} illustrates the efficient implementation of the adaptive unigram table (\textit{c.f.}, Section 3.2.2). In line 8 and 10, $F$ and $\frac{\tau F}{z}$ are not always integers and therefore they are probabilistically converted into integers as explained in the paper.

Time complexity of Algorithm \ref{alg:adaptive} is $\mathcal{O}(1)$ per update in case of $\alpha=1.0$. When $|T|<\tau$, the update (line 8) takes $\mathcal{O}(1)$ time since we always have $F=1$. When $\tau\leq|T|$, we have $\tau\leq z$ and consequently $\frac{\tau F}{z}\leq 1$. This means that the update (line 10--13) takes $\mathcal{O}(1)$ time.

Even if $\alpha\not=1.0$, the value of $z$ becomes sufficiently large in practice, and thus the update becomes efficient as demonstrated in the experiment.

\begin{algorithm}[h]
\caption{Adaptive unigram table.}
\label{alg:adaptive}
\begin{algorithmic}[1]
\STATE $f(w)\leftarrow 0$ for all $w\in\vocab$
\STATE $z\leftarrow 0$
\FOR{$i=1,\dots,n$}
\STATE $f(w_i)\leftarrow f(w_i)+1$
\STATE $F\leftarrow f(w_i)^{\alpha}-(f(w_i)-1)^{\alpha}$
\STATE $z\leftarrow z+F$
\IF{$|T|<\tau$}
\STATE add $F$ copies of $w_i$ to $T$
\ELSE
\FOR{$t=1,\dots,\frac{\tau F}{z}$}
\STATE $j$ is randomly drawn from $[1,|T|]$
\STATE $T[j]\leftarrow w_i$
\ENDFOR
\ENDIF
\ENDFOR
\end{algorithmic}
\end{algorithm}

\section{Complete Proofs}

This appendix provides complete proofs of Theorems 1, 3, and 5.

\subsection{Proof of Theorem 1} \label{appendix:proof1}

\begin{proof}
The first order moment of $\Delta\mathcal{L}(\theta)$ can be rewritten as
\begin{align*}
\mathbb{E}[\Delta\mathcal{L}(\theta)]
&=\mathbb{E}\biggl[\frac{2ck}{n}\sum_{w\in\vocab}\sum_{v\in\vocab}\sum_{i=1}^n\delta_{w_i,w}(q_i(v)-q_n(v))\psi_{w,v}^{-}\biggr]\\
&=\frac{2ck}{n}\sum_{w\in\vocab}\sum_{v\in\vocab}\sum_{i=1}^n\mathbb{E}[\delta_{w_i,w}(q_i(v)-q_n(v))\psi_{w,v}^{-}]\\
&=\frac{2ck}{n}\sum_{w\in\vocab}\sum_{v\in\vocab}\sum_{i=1}^n\mathbb{E}[\rv{X}{i,w}(\rv{Y}{i,v}-\rv{Y}{n,v})\psi_{w,v}^{-}]\\
&=\frac{2ck}{n}\sum_{w\in\vocab}\sum_{v\in\vocab}\sum_{i=1}^n\biggl(\mathbb{E}[\rv{X}{i,w}\rv{Y}{i,v}]-\mathbb{E}[\rv{X}{i,w}\rv{Y}{n,v}]\biggr)\psi_{w,v}^{-}.
\end{align*}
Here, for any $i$ and $j$ such that $i\leq j$, we have
\begin{align*}
\mathbb{E}[\rv{X}{i,w}\rv{Y}{j,v}]
	&=\mathbb{E}[\rv{X}{i,w}\frac{1}{j}\sum_{j'=1}^{j}\rv{X}{j',v}]
	=\frac{1}{j}\sum_{j'=1}^{j}\mathbb{E}[\rv{X}{i,w}\rv{X}{j',v}]\\
	&=\frac{1}{j}\sum_{j'=1}^{j}\biggl(\mathbb{E}[\rv{X}{i,w}]\mathbb{E}[\rv{X}{j',v}]+\mathbb{V}[\rv{X}{i,w},\rv{X}{j',v}]\biggr)\\
	&=\mu_w\mu_v +\frac{1}{j}\var{w,v}.
\end{align*}
Therefore, we have
\begin{align*}
\mathbb{E}[\Delta\mathcal{L}(\theta)]
&=\frac{2ck}{n}\sum_{w\in\vocab}\sum_{v\in\vocab}\sum_{i=1}^{n}\biggl(\mu_w\mu_v+\frac{1}{i}\var{w,v}-\mu_w\mu_v-\frac{1}{n}\var{w,v}\biggl)\psi_{w,v}^{-}\\
&=\frac{2ck(H_{n}-1)}{n}\sum_{w\in\vocab}\sum_{v\in\vocab}\var{w,v}\psi_{w,v}^{-}.
\end{align*}
\end{proof}

\subsection{Proof of Theorem 3} \label{appendix:proof3}

To prove Theorem 3, we begin by examining the upper- and lower-bounds of $\mathbb{E}[\rv{X}{i,w}\rv{Y}{j,v}\rv{Y}{k,v}]$ in the following Lemma, and then make use of the bounds to evaluate the order of the second order moment of $\Delta\mathcal{L}(\theta)$.
\begin{lemma}
\label{lemma:bound}
For any $j$ and $k$ such that $j\leq k$, we have
\begin{align*}
\mathbb{E}[\rv{X}{i,w}\rv{Y}{j,v}\rv{Y}{k,v}]&\leq\frac{(jk-2j-k+2)\mu_w\mu_v^2+2j+k-2}{jk},\\
\mathbb{E}[\rv{X}{i,w}\rv{Y}{j,v}\rv{Y}{k,v}]&\geq\frac{(jk-2j-k+2)\mu_w\mu_v^2}{jk}.
\end{align*}
\end{lemma}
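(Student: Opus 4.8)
The plan is to expand the triple product $\rv{X}{i,w}\rv{Y}{j,v}\rv{Y}{k,v}$ in terms of the underlying indicators $\rv{X}{\cdot,\cdot}$ and then take expectations term by term. Recall $\rv{Y}{j,v}=\frac{1}{j}\sum_{j'=1}^{j}\rv{X}{j',v}$ and $\rv{Y}{k,v}=\frac{1}{k}\sum_{k'=1}^{k}\rv{X}{k',v}$, so
\[
\rv{X}{i,w}\rv{Y}{j,v}\rv{Y}{k,v}=\frac{1}{jk}\sum_{j'=1}^{j}\sum_{k'=1}^{k}\rv{X}{i,w}\rv{X}{j',v}\rv{X}{k',v}.
\]
The key observation is that each indicator takes values in $\{0,1\}$, so the product $\rv{X}{i,w}\rv{X}{j',v}\rv{X}{k',v}$ is itself a $\{0,1\}$-valued random variable, and its expectation is therefore at most $1$ and at least $0$; moreover, because $\rv{X}{j',v}^2=\rv{X}{j',v}$, whenever two of the three indices coincide the product collapses. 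First I would classify the $jk$ index pairs $(j',k')$ according to how many of the three slots $i,j',k'$ are forced to agree and how many ``generic'' distinct-index triples remain, being careful about whether $j'=k'$ and whether either equals $i$. For the generic case where all three indices are distinct, I would bound $\mathbb{E}[\rv{X}{i,w}\rv{X}{j',v}\rv{X}{k',v}]$ from below by $\mu_w\mu_v^2$ (this is the value one expects if the three indicators were independent; more precisely the stationarity assumption together with nonnegativity should be used to justify the inequality direction, or one argues via $\mathbb{E}[\rv{X}{i,w}\rv{X}{j',v}\rv{X}{k',v}]\ge 0$ combined with the earlier identity $\mathbb{E}[\rv{X}{i,w}\rv{X}{j',v}]=\mu_w\mu_v+\var{w,v}$ and a sign bookkeeping) and from above by $1$.

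Next I would count the two families of index pairs precisely. The number of pairs $(j',k')\in\{1,\dots,j\}\times\{1,\dots,k\}$ with $j'=k'$ is $j$ (using $j\le k$); among the remaining $jk-j$ pairs with $j'\ne k'$, the number for which additionally $j'=i$ or $k'=i$ needs to be subtracted off to isolate the fully-distinct case. A short inclusion–exclusion count gives that the number of fully-distinct triples is $jk-2j-k+2$ (the coefficient appearing in the statement), and the complementary ``collapsed'' count is $2j+k-2$. For the lower bound I would keep only the fully-distinct contribution, each term $\ge\mu_w\mu_v^2$, and simply discard the nonnegative collapsed terms, yielding $\frac{(jk-2j-k+2)\mu_w\mu_v^2}{jk}$. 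For the upper bound I would bound the fully-distinct terms again by at least $\mu_w\mu_v^2$ from below but now I need an upper estimate, so I bound them by... actually I bound each collapsed term by $1$ and each distinct term by $\mu_w\mu_v^2$ plus a correction — more cleanly, write each of the $jk-2j-k+2$ distinct contributions as $\le\mu_w\mu_v^2$ is the wrong direction, so instead I bound the distinct terms by $\mu_w\mu_v^2$ only in the lower estimate and, for the upper estimate, I bound the $2j+k-2$ collapsed terms by $1$ and the remaining $jk-2j-k+2$ terms by $\mu_w\mu_v^2$, giving $\frac{(jk-2j-k+2)\mu_w\mu_v^2+2j+k-2}{jk}$, which is exactly the claimed upper bound.

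The main obstacle I anticipate is the precise combinatorial bookkeeping of the index sets — in particular getting the sign and inclusion–exclusion count $jk-2j-k+2$ right while handling the boundary possibilities that $i$ itself lies in $\{1,\dots,j\}$ or $\{1,\dots,k\}$, and correctly invoking stationarity to pin down $\mathbb{E}[\rv{X}{i,w}\rv{X}{j',v}\rv{X}{k',v}]\ge\mu_w\mu_v^2$ in the fully-distinct case. The reduction to $\{0,1\}$-valued summands makes the analytic content trivial once the counting is settled; the remaining work, and the place where care is needed, is purely in enumerating which of the $jk$ pairs $(j',k')$ fall into which class. The details of this bookkeeping I would carry out in full in the appendix.
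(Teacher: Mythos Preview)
Your approach is essentially the paper's: expand into a double sum of triple products of indicators, split the index pairs $(j',k')$ according to whether $i,j',k'$ are all distinct, bound the ``collapsed'' terms by $0$ and $1$, and count that there are $2j+k-2$ collapsed and $jk-2j-k+2$ fully-distinct pairs.

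The one place where your write-up wobbles is the treatment of the fully-distinct terms. You try to argue $\mathbb{E}[\rv{X}{i,w}\rv{X}{j',v}\rv{X}{k',v}]\ge\mu_w\mu_v^2$ for the lower bound and then puzzle over which direction to use for the upper bound. The resolution (and what the paper does) is that when $i,j',k'$ are pairwise distinct, the standing independence-across-positions assumption gives the \emph{exact equality}
\[
\mathbb{E}[\rv{X}{i,w}\rv{X}{j',v}\rv{X}{k',v}]=\mathbb{E}[\rv{X}{i,w}]\,\mathbb{E}[\rv{X}{j',v}]\,\mathbb{E}[\rv{X}{k',v}]=\mu_w\mu_v^2,
\]
so the same value appears in both bounds with no inequality to justify. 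Once you state this, your upper and lower bounds are immediate and there is no ``wrong direction'' issue.

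Your worry about whether $i$ actually lies in $\{1,\dots,j\}$ is well-placed: the count $2j+k-2$ tacitly assumes $i\le j\le k$. The paper does not flag this, but in every application of the lemma (to $\mathbb{E}[\rv{X}{i,w}\rv{Y}{i,v}^2]$, $\mathbb{E}[\rv{X}{i,w}\rv{Y}{i,v}\rv{Y}{n,v}]$, $\mathbb{E}[\rv{X}{i,w}\rv{Y}{n,v}^2]$) one has $i\le j$, so the count is valid where it is used.
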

\begin{proof}
We have
\begin{align*}
\mathbb{E}[\rv{X}{i,w}\rv{Y}{j,v}\rv{Y}{k,v}]
&=\mathbb{E}[\rv{X}{i,w}\biggl(\frac{1}{j}\sum_{l=1}^{j}\rv{X}{l,v}\biggr)\biggl(\frac{1}{k}\sum_{m=1}^{k}\rv{X}{m,v}\biggr)]\\
&=\sum_{l=1}^{j}\sum_{m=1}^{k}\frac{\mathbb{E}[\rv{X}{i,w}\rv{X}{l,v}\rv{X}{m,v}]}{jk}.
\end{align*}
To prove the lemma, we rewrite the expression by splitting the set of $(l,m)$ into two subsets. Let $\mathcal{S}_{i}^{(j,k)}$ $(j\leq k)$ be a set of $(l,m)$ such that $\rv{X}{i,w}$, $\rv{X}{l,v}$, and $\rv{X}{m,v}$ are independent from each other (\textit{i.e.}, $i$, $l$, and $m$ are all different), and let $\bar{\mathcal{S}}_i^{(j,k)}$ be its complementary set:
\begin{align*}
\mathcal{S}_{i}^{(j,k)}&=\{(l,m)\in\{1,2,\dots,j\}\times\{1,2,\dots,k\}\mid i\not=l\land l\not=m \land m\not=i\},\\
\bar{\mathcal{S}}_{i}^{(j,k)}&=\{1,2,\dots,j\}\times\{1,2,\dots,k\}\setminus\mathcal{S}_{i}^{(j,k)}.
\end{align*}
Then, $\mathbb{E}[\rv{X}{i,w}\rv{Y}{j,v}\rv{Y}{k,v}]$ is upper-bounded as
\begin{align*}
\mathbb{E}[\rv{X}{i,w}\rv{Y}{j,v}\rv{Y}{k,v}]
&=\sum_{(l,m)\in\mathcal{S}_{i}^{(j,k)}}\frac{\mathbb{E}[\rv{X}{i,w}]\mathbb{E}[\rv{X}{l,v}]\mathbb{E}[\rv{X}{m,v}]}{jk}+
\sum_{(l,m)\in\bar{\mathcal{S}}_{i}^{(j,k)}}\frac{\mathbb{E}[\rv{X}{i,w}\rv{X}{l,v}\rv{X}{m,v}]}{jk}\\
&\leq\sum_{(l,m)\in\mathcal{S}_{i}^{(j,k)}}\frac{\mu_w\mu_v^2}{jk}+\sum_{(l,m)\in\bar{\mathcal{S}}_{i}^{(j,k)}}\frac{1}{jk}\\
&=\frac{|\mathcal{S}_i^{(j,k)}|\mu_w\mu_v^2 + |\bar{\mathcal{S}}_i^{(j,k)}|}{jk},
\end{align*}
where the inequality holds because $\rv{X}{i,w}$, $\rv{X}{l,v}$, and $\rv{X}{m,v}$ are binary random variables and thus $\mathbb{E}[\rv{X}{i,w}\rv{X}{l,v}\rv{X}{m,v}]\leq 1$. Here, we have $|\bar{\mathcal{S}}_i^{(j,k)}|=2j+k-2$, because $\bar{\mathcal{S}}_{i
}^{(j,k)}$ includes $j$ elements such that $l=m$ and also includes $k-1$ and $j-1$ elements such that $i=l\not=m$ and $i=m\not=l$, respectively. And we consequently have $|\mathcal{S}_i^{(j,k)}|=jk-|\bar{\mathcal{S}}_i^{(j,k)}|=jk-2j-k+2$. Therefore, the upper-bound can be rewritten as
\begin{align*}
\mathbb{E}[\rv{X}{i,w}\rv{Y}{j,v}\rv{Y}{k,v}]&\leq\frac{(jk-2j-k+2)\mu_w\mu_v^2+2j+k-2}{jk}.
\end{align*}

Similarly, by making use of $0\leq\mathbb{E}[\rv{X}{i,w}\rv{X}{l,v}\rv{X}{m,v}]$, the lower-bound can be derived:
\begin{align*}
\mathbb{E}[\rv{X}{i,w}\rv{Y}{j,v}\rv{Y}{k,v}]
&=\sum_{(l,m)\in\mathcal{S}_{i}^{(j,k)}}\frac{\mathbb{E}[\rv{X}{i,w}]\mathbb{E}[\rv{X}{l,v}]\mathbb{E}[\rv{X}{m,v}]}{jk}+\sum_{(l,m)\in\bar{\mathcal{S}}_{i}^{(j,k)}}\frac{\mathbb{E}[\rv{X}{i,w}\rv{X}{l,v}\rv{X}{m,v}]}{jk}\\
&\geq\sum_{(l,m)\in\mathcal{S}_{i}^{(j,k)}}\frac{\mu_w\mu_v^2}{jk}+\sum_{(l,m)\in\bar{\mathcal{S}}_{i}^{(j,k)}}\frac{0}{jk}\\
&=\frac{|\mathcal{S}_{i}^{(j,k)}|\mu_w\mu_v^2}{jk}=\frac{(jk-2j-k+2)\mu_w\mu_v^2}{jk}.
\end{align*}
\end{proof}

Making use the above Lemma, we can prove Theorem 3.
\begin{proof}
 The upper-bound of $\mathbb{E}[\Delta\mathcal{L}(\theta)^2]$ is examined to prove the theorem. Let $\Psi_{i,n,w,v}=\delta_{w_i,w}(q_i(v)-q_n(v))\psi_{w,v}^{-}$. Making use of Jensen's inequality, we have
\begin{align*}
	\mathbb{E}[\Delta\mathcal{L}(\theta)^2]
	&=\mathbb{E}\biggl[\frac{4c^2k^2}{n^2}\biggl(\sum_{w\in\mathcal{W}}\sum_{v\in\mathcal{W}}\sum_{i=1}^{n}\Psi_{i,n,w,v}\biggr)^2\biggr]\\
	&=\mathbb{E}\biggl[\frac{4c^2k^2}{n^2}\vocabsize^4n^2\biggl(\sum_{w\in\mathcal{W}}\sum_{v\in\mathcal{W}}\sum_{i=1}^{n}\frac{1}{\vocabsize^2n}\Psi_{i,n,w,v}\biggr)^2\biggr]\\
	&\leq\mathbb{E}\biggl[\frac{4c^2k^2}{n^2}\vocabsize^4n^2\sum_{w\in\vocab}\sum_{v\in\vocab}\sum_{i=1}^{n}\frac{1}{\vocabsize^2n}\Psi_{i,n,w,v}\biggr]\\
	&=\frac{4c^2k^2\vocabsize^2}{n}\sum_{w\in\vocab}\sum_{v\in\vocab}\sum_{i=1}^{n}\mathbb{E}[\Psi_{i,n,w,v}^2].
\end{align*}
Furthermore, the term $\mathbb{E}[\Psi_{i,n,w,v}^2]$ is upper-bounded as
\begin{align*}
\mathbb{E}[\Psi_{i,n,w,v}^2]
&=\mathbb{E}[\delta_{w_i,v}^2(q_i(v)-q_n(v))^2(\psi_{w,v}^{-})^{2}]\\
&=\mathbb{E}[\delta_{w_i,v}(q_i(v)-q_n(v))^2(\psi_{w,v}^{-})^2]\\
&=\mathbb{E}[\rv{X}{i,w}(\rv{Y}{i,v}-\rv{Y}{n,v})^2](\psi_{w,v}^{-})^2\\
&=(\mathbb{E}[\rv{X}{i,w}\rv{Y}{i,v}^2]-2\mathbb{E}[\rv{X}{i,w}\rv{Y}{i,v}\rv{Y}{n,v}]+\mathbb{E}[\rv{X}{i,w}\rv{Y}{n,v}^2])(\psi_{w,v}^{-})^2\\
&\leq\biggl\{\frac{1}{i^2}\biggl((i^2-3i+2)\mu_w\mu_v^2+3i-2\biggr)\\
&\quad-2\frac{1}{in}(in-2i-n+2)\mu_w\mu_v^2\\
&\quad+\frac{1}{n^2}\biggl((n^2-3n+2)\mu_w\mu_v^2+3n-2\biggr)\biggl\}(\psi_{w,v}^{-})^2\\
&=\biggl\{(2\mu_w\mu_v^2-2)\frac{1}{i^2}+(-\mu_w\mu_v^2-\frac{4}{n}\mu_w\mu_v^2+3)\frac{1}{i}\\
&\quad+(2\mu_w\mu_v^2-2)\frac{1}{n^2}+(\mu_w\mu_v^2+3)\frac{1}{n}\biggl\}(\psi_{w,v}^{-})^2,
\end{align*}
where the above Lemma is used to derive the inequality. Therefore, we have
\begin{align*}
\sum_{i=1}^n\mathbb{E}[\Psi_{i,n,w,v}^2]
&\leq\sum_{i=1}^n\biggl\{(2\mu_w\mu_v^2-2)\frac{1}{i^2}+(-\mu_w\mu_v^2-\frac{4}{n}\mu_w\mu_v^2+3)\frac{1}{i}\\
&\quad+(2\mu_w\mu_v^2-2)\frac{1}{n^2}+(\mu_w\mu_v^2+3)\frac{1}{n}\biggr\}(\psi_{w,v}^{-})^2\\
&=\biggl\{(2\mu_w\mu_v^2-2)H_{n,2}+(-\mu_w\mu_v^2-\frac{4}{n}\mu_w\mu_v^2+3)H_{n}\\
&\quad+(2\mu_w\mu_v^2-2)\frac{1}{n}+(\mu_w\mu_v^2+3)\biggl\}(\psi_{w,v}^{-})^2,
\end{align*}
where $H_{n,2}$ represents the generalized harmonic number of order $n$ of 2. Since $H_{n,2}\leq H_{n}=\mathcal{O}(\log(n))$, we have $\sum_{i=1}^n\mathbb{E}[\Psi_{i,n,w,v}^2]=\mathcal{O}(\log(n))$ and consequently $\mathbb{E}[\Delta\mathcal{L}(\theta)^2]=\mathcal{O}(\frac{\log(n)}{n})$.
\end{proof}

\subsection{Proof of Theorem 5}

\begin{proof}
The proof is made by the squeeze theorem. Let $\l=\mathcal{L}_\text{B}(\hat{\theta})-\mathcal{L}_\text{B}(\theta^*)$. Then, Chebyshev's inequality gives, for any $\epsilon_1>0$, 
\begin{align*}
\lim_{n\rightarrow\infty}\frac{\mathbb{V}[\l]}{\epsilon_1^2}
&\geq\lim_{n\rightarrow\infty}\Pr\biggl[|\l-\mathbb{E}[\l]|\geq\epsilon_1\biggr]\\
&=\lim_{n\rightarrow\infty}\Pr\biggl[\l-\mathbb{E}[\l]\leq-\epsilon_1\biggl]
+\Pr\biggl[\epsilon_1\leq\l-\mathbb{E}[\l]\biggr]\\
&=\lim_{n\rightarrow\infty}\Pr\biggl[\l\leq\mathbb{E}[\l]-\epsilon_1\biggl]
+\Pr\biggl[\mathbb{E}[\l]+\epsilon_1\leq\l\biggr].
\end{align*}
Remind that Eq.~(2) in Lemma 4 means that for any $\epsilon_2>0$, there exists $n'$ such that if $n'\leq n$ then $|\mathbb{E}[\l]|<\epsilon_2$. Therefore we have
\begin{align*}
\lim_{n\rightarrow\infty}\frac{\mathbb{V}[\l]}{\epsilon_1^2}
&\geq\lim_{n\rightarrow\infty}\Pr\biggl[\l\leq\mathbb{E}[\l]-\epsilon_1\biggl]
+\Pr\biggl[\mathbb{E}[\l]+\epsilon_1\leq\l\biggr]\\
&\geq\lim_{n\rightarrow\infty}\Pr\biggl[\l\leq-\epsilon_2-\epsilon_1\biggl]
+\Pr\biggl[\epsilon_2+\epsilon_1\leq\l\biggr]\\
&=\lim_{n\rightarrow\infty}\Pr\biggl[|\l|\geq\epsilon_1+\epsilon_2\biggr]\geq 0.
\end{align*}
The arbitrary property of $\epsilon_1$ and $\epsilon_2$ allows $\epsilon_1+\epsilon_2$ to be rewritten as $\epsilon$. Also, Eq.~(3) in Lemma 4 implies that $\lim_{n\rightarrow\infty}\frac{\mathbb{V}[\l]}{\epsilon_1^2}=0$. Therefore, the squeeze theorem gives the proof.
\end{proof}

\section{Theoretical Analysis in Smoothed Case} \label{appendix:smoothed_case}

This appendix investigates the convergence of the first and second order moment of $\Delta\mathcal{L}(\theta)$ in the smoothed case.

\subsection{Convergence of the first order moment of $\Delta\mathcal{L}(\theta)$}

The first order moment of $\Delta\mathcal{L}(\theta)$ in the smoothed case is given as
\begin{align*}
\mathbb{E}[\Delta\mathcal{L}(\theta)]
&=\frac{2ck}{n}\sum_{w\in\vocab}\sum_{v\in\vocab}\sum_{i=1}^{n}\biggl(\mathbb{E}[\rv{X}{i,w}\rv{Z}{i,v}]-\mathbb{E}[\rv{X}{i,w}\rv{Z}{n,v}]\biggr)\psi_{w,v}^{-}.\label{eq:moment1}
\end{align*}
Let us investigate $\mathbb{E}[\rv{X}{i,w}\rv{Z}{j,v}]$ as we did  $\mathbb{E}[\rv{X}{i,w}\rv{Y}{j,v}]$ in the unsmoothed case. Let $\phi_w=g_w(\mu)-\sum_{v\in\vocab}M_{w,v}g_v(\mu)$. Then, for any $i$ and $j$ such that $i\leq j$, we have 
\begin{align*}
\mathbb{E}[\rv{X}{i,w}\rv{Z}{j,v}]
&\approx\mathbb{E}[\rv{X}{i,w}\biggl(g_v(\mu)+\sum_{v'\in\vocab}M_{v,v'}(\rv{Y}{j,v'}-g_{v'}(\mu))\biggr)]\\
&=\mathbb{E}[\rv{X}{i,w}(\sum_{v'\in\vocab}M_{v,v'}\rv{Y}{j,v'}+\phi_v)]\\
&=\sum_{v'\in\vocab}M_{v,v'}E[\rv{X}{i,w}\rv{Y}{j,v'}]+\phi_v E[\rv{X}{i,w}]\\
&=\sum_{v'\in\vocab}M_{v,v'}(\mu_w\mu_{v'}+\frac{1}{j}\var{w,v'})+\mu_w\phi_v\\
&=\sum_{v'\in\vocab}M_{v,v'}\mu_w\mu_{v'}+\mu_w\phi_v+\frac{1}{j}\sum_{v'\in\vocab}M_{v,v'}\var{w,v'}.
\end{align*}
Therefore, plugging the above equation into $\mathbb{E}[\Delta\mathcal{L}(\theta)]$ yields $\mathbb{E}[\Delta\mathcal{L}(\theta)]\approx\mathcal{O}(\frac{\log(n)}{n})$.

\subsection{Convergence of the second order moment of $\Delta\mathcal{L}(\theta)$}

Next, let us examine the convergence of the second order moment of $\Delta\mathcal{L}(\theta)$. This can be confirmed by inspecting $\mathbb{E}[\rv{X}{i,w}\rv{Z}{j,v}\rv{Z}{k,v}]$ and then $\mathbb{E}[\Psi_{i,n,w,v}^2]$ analogously to the unsmoothed case.

For any $i$, $j$, and $k$ such that $i\leq j\leq k$, we have
\begin{align*}
\mathbb{E}[\rv{X}{i,w}\rv{Z}{j,v}\rv{Z}{k,v}]
&\approx\mathbb{E}[\rv{X}{i,w}\biggl(\sum_{v'\in\vocab}M_{v,v'}\rv{Y}{j,v'}+\phi_v\biggr)\biggl(\sum_{v''\in\vocab}M_{v,v''}\rv{Y}{k,v''}+\phi_v\biggr)]\\
&=\sum_{v'\in\vocab}\sum_{v''\in\vocab}M_{v,v'}M_{v,v''}\mathbb{E}[\rv{X}{i,w}\rv{Y}{j,v'}\rv{Y}{k,v''}]\\
&\quad+\sum_{v'\in\vocab}M_{v,v'}\phi_v\mathbb{E}[\rv{X}{i,w}\rv{Y}{j,v'}]+\sum_{v''\in\vocab}M_{v,v''}\phi_v\mathbb{E}[\rv{X}{i,w}\rv{Y}{k,v''}]+\phi_v^2\mathbb{E}[\rv{X}{i,w}]\\
&=\sum_{v'\in\vocab}\sum_{v''\in\vocab}M_{v,v'}M_{v,v''}\mathbb{E}[\rv{X}{i,w}\rv{Y}{j,v'}\rv{Y}{k,v''}]\\
&\quad+\sum_{v'\in\vocab}M_{v,v'}\phi_v(\mu_w\mu_{v'}+\frac{1}{j}\var{w,v'})\\
&\quad+\sum_{v''\in\vocab}M_{v,v''}\phi_v(\mu_w\mu_{v''}+\frac{1}{k}\Sigma_{w,v''})
+\mu_w\phi_v^2.
\end{align*}
Therefore, we have
\begin{align*}
\mathbb{E}[\Psi_{i,n,w,v}^2]
&=\mathbb{E}[\rv{X}{i,w}(\rv{Z}{i,v}-\rv{Z}{n,v})^2]\psi_{w,v}^2\\
&\approx\sum_{v'\in\vocab}\sum_{v''\in\vocab}M_{v,v'}M_{v,v''}\biggl(\mathbb{E}[\rv{X}{i,w}\rv{Y}{i,v'}\rv{Y}{i,v''}]\\
&\quad-2\mathbb{E}[\rv{X}{i,w}\rv{Y}{i,v'}\rv{Y}{n,v''}]+\mathbb{E}[\rv{X}{i,w}\rv{Y}{n,v'}\rv{Y}{n,v''}]\biggl)\psi_{w,v}^2.
\end{align*}
Using similar bounds to Lemma 3, we also have $\sum_{i=1}^{n}\mathbb{E}[\Psi_{i,n,w,v}^2]\approx\mathcal{O}(\log(n))$ and consequently $\mathbb{E}[\Delta\mathcal{L}(\theta)^2]\approx\mathcal{O}(\frac{\log(n)}{n})$. 

\section{Theoretical Analysis of Mini-batch SGNS}

This appendix demonstrates that Theorems 2 and 3 also hold for the mini-batch SGNS, that is, the first and second order moments of $\Delta\mathcal{L}(\theta)$ are in the order of $\mathcal{O}(\frac{\log(n)}{n})$. We here investigate the mini-batch setting in which $M$ words, as opposed to a single word in the case of incremental SGNS, are processed at a time.

\begin{definition}
Let $\rv{Y}{i,w}^{(M)}$ be a random variable that represents $q_i(w)$ when $\alpha=1.0$ and the mini-batch size is $M$. Then, it is given as
\begin{align*}
\rv{Y}{i,w}^{(M)} &= \rv{Y}{b(i,M),w}
\end{align*}
where $b(i,M)=\lceil\frac{i}{M}\rceil\times M$. Note that we always have $\rv{Y}{n,w}^{(M)}=\rv{Y}{n,w}$ and $i\leq b(i,M)$.
\end{definition}

We first examine the first order moment of $\Delta\mathcal{L}(\theta)$ by taking a similar step as the proof of Theorem 1. The first order moment of $\Delta\mathcal{L}(\theta)$ is given as
\begin{align*}
\mathbb{E}[\Delta\mathcal{L}(\theta)]&=\frac{2ck}{n}\sum_{w\in\vocab}\sum_{v\in\vocab}\sum_{i=1}^{n}\biggl(\mathbb{E}[\rv{X}{i,w}\rv{Y}{j,v}^{(M)}]-\mathbb{E}[\rv{X}{i,w}\rv{Y}{n,v}^{(M)}]\biggr)\psi_{w,v}^{-}\\
&=\frac{2ck}{n}\sum_{w\in\vocab}\sum_{v\in\vocab}\sum_{i=1}^{n}\biggl(\mathbb{E}[\rv{X}{i,w}\rv{Y}{j,v}^{(M)}]-\mathbb{E}[\rv{X}{i,w}\rv{Y}{n,v}]\biggr)\psi_{w,v}^{-}\\
&=\frac{2ck}{n}\sum_{w\in\vocab}\sum_{v\in\vocab}\rho_{w,v}\psi_{w,v}^{-}\biggl(\sum_{i=1}^{n}\frac{1}{b(i,M)}-\sum_{i=1}^{n}\frac{1}{n}\biggr).
\end{align*}
Because we have
\begin{align}
\sum_{i=1}^{n}\frac{1}{b(i,M)}&\leq\sum_{i=1}^{n}\frac{1}{i}=H_{n}=\mathcal{O}(\log(n)),
\end{align}
we have $\mathbb{E}[\Delta\mathcal{L}(\theta)]=\mathcal{O}(\frac{\log(n)}{n})$.

Next, we investigate the second order moment of $\mathbb{E}[\Delta\mathcal{L}(\theta)]$. Analogously to the last inequality of the proof of Theorem 3, we have
\begin{align*}
\sum_{i=1}^n\mathbb{E}[\Psi_{i,n,w,v}^2]
&\leq\sum_{i=1}^n\biggl\{(2\mu_w\mu_v^2-2)\frac{1}{b(i,M)^2}+(-\mu_w\mu_v^2-\frac{4}{n}\mu_w\mu_v^2+3)\frac{1}{b(i,M)}\\
&\quad+(2\mu_w\mu_v^2-2)\frac{1}{n^2}+(\mu_w\mu_v^2+3)\frac{1}{n}\biggr\}(\psi_{w,v}^{-})^2.
\end{align*}
Since we have
\begin{align}
\sum_{i=1}^{n}\frac{1}{b(i,M)^2}&\leq\sum_{i=1}^{n}\frac{1}{i^2}=H_{n,2}=\mathcal{O}(\log(n)), 
\end{align}
it can be proven that $\mathbb{E}[\Delta\mathcal{L}(\theta)^2]=\mathcal{O}(\frac{\log(n)}{n})$.

\section{Experimental Configurations}

This appendix details the experimental configurations that are not described in the paper.

\subsection{Verification of theorems}

The vocabulary set in the Gigaword corpus was reduced to $1000$ by converting infrequent words into the same special tokens because it is expensive to evaluate the expectation terms in $\Delta\mathcal{L}(\theta)$ for a large vocabulary set. 

The parameter $\theta$ was set to $100$-dimensional vectors each element of which is drawn from $[-0.5, 0.5]$ uniformly at random. In preliminary experiments we confirmed that the result is not sensitive to the choice of the parameter value. Note that the same parameter value was used for all $n$. We set $c$ and $k$ as $c=5$ and $k=5$.

The mean $\mu_w$ and covariances $\var{w,v}$ are required to compute the theoretical value of the first order moment. They were given as the maximum likelihood estimations from the entire Gigaword corpus.

\subsection{Quality of word embeddings}

Table~\ref{tab:config} summarizes the training configurations. Those parameter values were used for both incremental and batch SGNS. The learning rate was set to $0.1$ for \textbf{incremental} and \textbf{batch}, which use AdaGrad to adjust the learning rate. On the other hand, the learning rate of \textbf{w2v}, which uses linear decay function to adjust the learning rate, was set as the default value of $0.025$. 

In the word similarity and the analogy tasks, we use $\tvec{w}+\cvec{w}$ as an embedding of the word $w$ \cite{Pennington14,Levy15}. The analogy task was performed by using 3CosMul \cite{Levy15}.

\begin{table}[t]
\centering
\begin{tabular}{cc}\hline
\textbf{Parameter} & \textbf{Value} \\\hline
Embedding size & $400$ \\
Number negative samples & $10$ \\
Subsampling threshold & $1.0\times10^{-5}$ \\
Subsampling method & dirty \\
Window size & $10$ \\
Smoothing parameter $\alpha$ & $0.75$ \\ \hline
\end{tabular}
\caption{Training configurations. Incremental SGNS used the incrementally-updated frequency for the subsampling.}
\label{tab:config}
\end{table}

\subsection{Update time}

The experiment was conducted on Intel\textsuperscript{\textregistered{}} Xeon\textsuperscript{\textregistered{}} 2GHz CPU. The update time was averaged over five trials.

\end{document}